\documentclass{article}

\usepackage{microtype}
\usepackage{graphicx}
\usepackage[dvipsnames]{xcolor}
\usepackage{subcaption}
\usepackage{booktabs} 
\usepackage{multirow}   
\usepackage{enumitem}
\usepackage{hyperref}

\usepackage[accepted]{icml2026}

\usepackage{amsmath}
\usepackage{amssymb}
\usepackage{mathtools}
\usepackage{amsthm}

\usepackage[capitalize,noabbrev]{cleveref}

\theoremstyle{plain}
\newtheorem{theorem}{Theorem}[section]
\newtheorem{proposition}[theorem]{Proposition}
\newtheorem{lemma}[theorem]{Lemma}

\theoremstyle{definition}
\newtheorem{definition}[theorem]{Definition}

\newtheorem{example}[theorem]{Example}
\theoremstyle{remark}

\usepackage{bbm} 
\usepackage{bm}
\DeclareMathOperator*{\argmax}{argmax}

\newcommand{\Pihr}{\Pi^{HR}}

\newcommand{\Pimr}{\Pi^{MR}}
\newcommand{\Pisr}{\Pi^{SR}}
\newcommand{\Pisd}{\Pi^{SD}}
\newcommand{\marg}{M}
\newcommand{\margmax}{M_{\text{max}}}
\newcommand{\prefcum}{c_M^{\pi}}
\newcommand{\Piwd}{\Pi^{1}(\mu)}
\newcommand{\ppf}{\mathcal{P}} 

\usepackage{pifont}
\usepackage{xcolor}
\newcommand{\yes}{\textcolor{green!60!black}{\ding{51}}}
\newcommand{\no}{\textcolor{red!70!black}{\ding{55}}}
\newcommand{\maybe}{\textcolor{gray}{\textbf{?}}}
\newcommand{\commentcolor}{Plum}

\newcommand{\partially}{\textcolor{orange!85!black}{\textbf{$\sim$}}}
\usepackage[textsize=tiny]{todonotes}

\newcommand\blfootnote[1]{%
  \begingroup
  \renewcommand\thefootnote{}\footnote{#1}%
  \addtocounter{footnote}{-1}%
  \endgroup
}

\icmltitlerunning{Reinforcement Learning with Pairwise Preferences in Long-Term Decision Problems}

\begin{document}

\twocolumn[
  \icmltitle{Reinforcement Learning with Pairwise Preferences in Long-Term Decision Problems}




  \begin{icmlauthorlist}
    \icmlauthor{Jonathan Cola\c{c}o Carr\textsuperscript{*}}{mcgill,mila}
    \icmlauthor{Prakash Panangaden}{mcgill,mila}
    \icmlauthor{Doina Precup}{mcgill,mila}
    \icmlauthor{Benjamin Van Roy}{stanford}
  \end{icmlauthorlist}
\blfootnote{\textsuperscript{*}Work done while at Stanford University.}
  \icmlaffiliation{mcgill}{School of Computer Science, McGill University, Montreal, Quebec, Canada}
  \icmlaffiliation{mila}{Mila - Quebec AI Institute, Montreal, Quebec, Canada}
  \icmlaffiliation{stanford}{Department of Electrical Engineering, Stanford University, Stanford, California, 
  USA}

  \icmlcorrespondingauthor{Jonathan Cola\c{c}o Carr}{jonathan.colacocarr@mail.mcgill.ca}

  \icmlkeywords{Nash learning from human feedback, reinforcement learning, reinforcement learning from human feedback, ICML}

  \vskip 0.3in
]



\printAffiliationsAndNotice{\textsuperscript{*}Work done while at Stanford University.}  

\begin{abstract}
Reinforcement learning with scalar rewards is widely used for aligning machine-learning systems with user preferences. But, pairwise preferences are often more natural for users to specify than scalar rewards, and they express certain goals that scalar rewards cannot. Methods for reinforcement learning with pairwise preferences have thus received growing interest. Unfortunately, these methods are inefficient in problems with long time horizons, and they lack guarantees on the performance of Markov policies relative to history-dependent policies, which bridge the theory and practice of reinforcement learning. We address these limitations in a new problem setting for reinforcement learning with pairwise preferences called the \textit{Markov decision contest}. In this setting, we prove that stationary Markov policies perform just as well as history-dependent policies; that the problem of recovering an optimal policy exactly is in P; and that a simple iterative algorithm converges to an optimal policy at a sublinear rate. Lastly, we implement a deep-learning variant of our iterative algorithm and demonstrate its efficiency in long-term decision problems that require function approximation.
\end{abstract}

\section{Introduction}
\label{sec:introduction}
Reinforcement learning with scalar reward functions has been proposed as a solution method for aligning machine-learning systems to user preferences. This method first assumes that user preferences can be represented by a scalar reward function (one that assigns a scalar to each individual outcome) and then finds a behaviour policy that maximizes the expected cumulative reward. This method can work if user preferences can be represented by a scalar reward function. But this is not always the case. In fact, one of the most common models of user preferences, the pairwise-preference function, cannot always be reduced to a scalar reward function. 

Pairwise-preference functions assign a scalar to each pair of outcomes. They represent a user’s degree of preference for one outcome over another. Often they are used to model user preferences in text-based problems~\citep{stiennon2020learning,rafailov2023direct,azar2024general}. These functions are strictly more general than scalar rewards: every scalar reward function can be represented as a pairwise-preference function, but not vice versa~\citep{fishburn1982nontransitive}. Still, every pairwise-preference function admits a set of optimal policies, which corresponds to a set of Nash equilibrium points. Methods that search for policies at such equilibrium points have thus been developed and proposed for user-system alignment. These methods, referred to collectively as \textit{reinforcement learning with pairwise preferences}, are relatively new. But they consistently outperform methods that maximize scalar reward functions in short-term decision problems, in which a system interacts with a user over a small number of timesteps~\citep{munos2024nash,rosset2024direct,wang2025magnetic,zhang2025iterative}.

In long-term decision problems, where a system interacts with a user over many timesteps, there are basic questions about reinforcement learning with pairwise preferences that have not been answered. It remains unclear whether these methods are feasible when the number of interaction steps is large (or possibly infinite). It is also unclear whether Markov behaviour policies, which are easy to store and implement, perform as well as policies that depend on the full interaction history. Markov decision processes~\citep{puterman2014markov} provide answers to the analogs of these questions for reinforcement learning with scalar rewards. Similar answers for reinforcement learning with pairwise preferences are needed to determine whether it remains viable in long-term decision problems.

In this paper, we study reinforcement learning with pairwise preferences in long-term decision problems. We do so within a new problem model called the \textit{Markov decision contest}. This model generalizes the Markov decision process and represents user preferences with a pairwise-preference function. After defining a notion of optimality for these contests, we show three things:

\begin{enumerate}
	\item Stationary Markov policies are optimal within the set of all history-dependent policies (Section~\ref{sec:existence}). Thus, solution methods need only consider Markov policies.
	\item The problem of solving a Markov decision contest exactly is in P---the same complexity as solving a Markov decision process (Section~\ref{sec:exact-solution-methods}). This is a surprise: Markov decision contests are more general than Markov decision processes.
	\item A simple iterative algorithm, Hedged Policy Iteration (HPI), converges to an optimal policy at a rate of $1/\sqrt{K}$, where $K$ is the number of iterations (Section~\ref{sec:approximate-solution-methods}). A deep-learning implementation of HPI is provided and validated in 13 Markov decision contests that require function approximation.
\end{enumerate}

These findings suggest that reinforcement learning with pairwise preferences remains a promising solution method for user-system alignment in long-term decision problems.

\section{Preliminaries}
\label{sec:preliminaries}
\begin{table*}[!t]
\centering
\caption{Summary of contributions.}
\label{tab:related-work}
\setlength{\tabcolsep}{6pt}
\begin{tabular}{l ccc c cc}
\toprule
\multirow{2}{*}{}
  & \multicolumn{3}{c}{\textbf{Problem-Model Desiderata}}
  & \multicolumn{1}{c}{\textbf{Exact Methods}}
  & \multicolumn{2}{c}{\textbf{Approximate Methods}} \\
\cmidrule(lr){2-4} \cmidrule(lr){5-5} \cmidrule(lr){6-7}
  & Pairwise & Horizon & $\pi^{SR}$ vs. $\pi^{HR}$
  & Complexity
  & Convergence & FA \\
\midrule
\citet{gilbert2015solving,gilbert2016modelfree}
  & \yes & finite & \no & \maybe & \yes & \no \\
\citet{chen2022human}
  & \yes & finite & \no & \maybe & \yes & \no \\
\citet{wang2023rlhf}
  & \yes & finite & \no & \maybe & \yes & \no \\
\citet{swamy2024minimaximalist}
  & \yes & finite & \no & \maybe & \yes & \partially \\
\citet{shani2024multi}
  & \yes & finite & \no & \maybe & \yes & \yes \\
\citet{wu2026multi}
  & \yes & finite & \no & \maybe & \yes & \yes \\
\textbf{Markov Decision Contests}
  & \yes & finite or $\infty$ & \yes & $\text{poly}(|\mathcal{S}||\mathcal{A}|)$ & \yes & \yes \\
  \midrule
Markov Decision Processes
  & \no & finite or $\infty$ & \yes & $\text{poly}(|\mathcal{S}||\mathcal{A}|)$ & \yes & \yes \\
\bottomrule
\end{tabular}
\end{table*}
All quantities will be defined with respect to a decision-making \textit{environment} $(\mathcal{S},\mathcal{A},P,\mu)$ that consists of: a finite set of states $\mathcal{S}$; a finite set of actions $\mathcal{A}$; a transition probability function $P:\mathcal{S}\times\mathcal{A}\to\text{Dist}(\mathcal{S})$; and an initial state distribution $\mu\in\text{Dist}(\mathcal{S})$.

After $t$ steps of interaction, the decision maker, or \textit{agent}, will have generated a \textit{history} $h_t=(s_0,a_0,\ldots, s_{t-1}, a_{t-1}, s_t)$, which is a finite sequence of alternating states and actions that begins and ends with states. The set of all histories of length $t$ is denoted by $\mathcal{H}_t$. The probability that the agent transitions to state $s' \in \mathcal{S}$ given state $s \in \mathcal{S}$ and action $a \in \mathcal{A}$ is written as $P(s'|s, a)$.
The \textit{horizon} of the decision problem is the number of interaction steps. The horizon can be either finite or infinite, and we denote it by $N$.

\subsection{Decision Rules and Policies}
\label{sec:drules}
We distinguish between history-dependent and Markov policies following~\citet[Chapter 2]{puterman2014markov}. This notation differs from~\citeauthor{sutton2018rl}'s, which may be more familiar to some readers.

A \textit{decision rule} specifies which action to take at a given timestep $t$. The most general rule is a \textit{randomized history-dependent decision rule} $d_t^{HR}:\mathcal{H}_t\to\text{Dist}(\mathcal{A})$, which assigns action-selection probabilities that depend on all previous states and actions. A \textit{randomized Markov decision rule} $d^{MR}:\mathcal{S}\to\text{Dist}(\mathcal{A})$ selects action probabilities that depend only on the state at the current timestep. A \textit{deterministic Markov decision rule} $d^{MD}:\mathcal{S}\to\text{Dist}(\mathcal{A})$ is a randomized Markov decision rule that only assigns degenerate probability distributions.
\begin{definition}
    A \textbf{policy} $\pi=(d_t)_{t=0}^{\infty}$ is an infinite sequence of decision rules. It is said to be:
    \begin{enumerate}
        \item \textbf{history-dependent and randomized (HR)} if, for all $t$, $d_t$ is a history-dependent decision rule.
        \item \textbf{Markov and randomized (MR)} if, for all $t$, $d_t$ is a randomized Markov decision rule.
        \item \textbf{stationary and randomized (SR)} if there is a randomized Markov decision rule $d^{MR}:\mathcal{S}\to\text{Dist}(\mathcal{A})$ such that, for all $t$, $d_t=d^{MR}$.
        \item \textbf{stationary and deterministic (SD)} if there is a deterministic Markov decision rule $d^{MD}:\mathcal{S}\to\text{Dist}(\mathcal{A})$ such that, for all $t$, $d_t=d^{MD}$.
    \end{enumerate}
    For a Markov decision rule $d^{MR}$, the policy \textbf{given by $d^{MR}$} is the stationary randomized policy $(d^{MR})_{t=0}^\infty$. In this case, we say that $d^{MR}$ \textbf{determines} the policy.
\end{definition}

The sets of all HR, MR, SR, and SD policies are denoted by $\Pihr,\Pimr,\Pisr,$ and $\Pisd$, respectively. Obviously, every randomized Markov decision rule can be represented as a randomized history-dependent decision rule. So, these policy sets are related as follows:
\[
\Pihr\supset \Pimr\supset  \Pisr \supset \Pisd
.\]

\subsection{Reinforcement Learning with Pairwise Preferences}
\begin{definition} Given a finite set of outcomes $\mathcal{Z}$,
\begin{itemize}
    \item A \textbf{pairwise-preference function} is a function $\ppf:\mathcal{Z}\times\mathcal{Z}\to [0,1]$ that satisfies, for all outcomes $z,z'\in\mathcal{Z}$, 
\[
\ppf(z, z')= 1  - \ppf(z',z) 
.\]
\item A \textbf{preference margin} $\marg:\mathcal{Z}\times\mathcal{Z}\to \mathbb{R}$ is a function that satisfies, for all outcomes $z,z'\in\mathcal{Z}$,
\[
\marg(z,z') = -\marg(z',z)
.\]
When $\mathcal{Z}=\mathcal{S}\times\mathcal{A}$, we will call $M$ a \textbf{Markov preference margin}.
\end{itemize}
\end{definition}

There is a one-to-one relationship between pairwise-preference functions and preference margins. For every pairwise-preference function $\mathcal{P}$, the function $\marg_{\ppf}$ given by
\[
\marg_{\ppf}(z,z') = \ppf(z,z') - 1/2
,\]
is a preference margin. We use the term ``preference margin'' because pairwise preference functions are used in other fields of computer science~(e.g~\citet{shah2018simple,vitelli2018probabilistic}).

The number $M(z,z')$ represents the amount of utility that would be traded to observe $z$ rather than $z'$. Thus, $M(z,z')>0$ if $z$ is preferred to $z'$, and $M(z,z')=0$ if there is no preference (or, indifference). In Appendix~\ref{ap:bt-model}, we discuss how preference margins relate to the Bradley-Terry model of stochastic choice.

Problem models for reinforcement learning with pairwise preferences consist of a decision-making environment $(\mathcal{S},\mathcal{A},P,\mu)$ and a preference margin $M:\mathcal{Z}\times\mathcal{Z}\to\mathbb{R}$. ~\citet{chen2022human,wang2023rlhf}, and~\citet{swamy2024minimaximalist} consider the case where $\mathcal{Z}=\mathcal{H}_N$, where $N\in \mathbb{N}$ is finite. Their optimization objective is
\[
\max_{\pi\in\Pi^{HR}}\min_{\pi'\in\Pi^{HR}}E_{\pi,\pi'}[M(H_N, H_N')]
.\]
\citet{gilbert2015solving,gilbert2016modelfree}, and~\citet{shani2024multi} consider the case where $\mathcal{Z}=\mathcal{S}$, and where the objective depends only on preferences between states that occur at the final timestep.

\section{Related Work}
\label{sec:related-work}

Table~\ref{tab:related-work} summarizes our contributions relative to prior work. We elaborate on related work in Appendix~\ref{ap:related-work}. The first three columns of the table are for desiderata for our problem model, while the last three concern solution methods. The column properties are:
\begin{itemize}
    \item \textit{Pairwise:} whether the objective of the problem model is described by a pairwise preference function.
    \item \textit{Horizon:} the horizon of the decision problem.
    \item $\pi^{SR}\text{ v.s. }\pi^{HR}$: whether the problem model has guarantees on the performance of stationary policies relative to history-dependent policies.
    \item \textit{Complexity:} Computational complexity of solving the problem exactly.
    \item \textit{Convergence:} Whether the problem admits approximate solution methods with convergence guarantees.
    \item \textit{FA:} Whether the proposed approximate solution methods were validated in experiments with function approximation (FA).
\end{itemize}




\section{Infinite-Horizon Decision Problems}
\label{sec:infinite}
The theory of infinite-horizon decision problems differs depending on whether the performance criterion is discounted or averaged~\citep{puterman2014markov}. Both criteria are useful. In~\citet[Chapter 4]{carr2026thesis}, we study a discounted version of the Markov decision contest. Here, we will study the average case. This will require the following material, all of which is review from~\citet[Chapter 8]{puterman2014markov}.
\subsection{Limiting Average State-Action Frequencies}
For every timestep $t$, $\Pr^{\pi,\mu}(S_t=s,A_t=a)\in[0,1]$ is the probability that an agent will observe $(s,a)$ at time $t$ if it starts from $\mu$ and selects actions according to $\pi\in\Pihr$. A \textit{limiting average state-action frequency} is an average of these probabilities taken over all timesteps,
\begin{equation}
    \label{eq:lim}
    \lim_{T\to\infty}\frac{1}{T}\sum_{t=0}^{T-1}\text{Pr}^{\pi,\mu}(S_t=s,A_t=a)
.\end{equation}
This limit does not always exist (see examples in Chapter 8 of~\citet{puterman2014markov}'s book). Extra conditions are required to maximize objectives that use these averages.

\subsection{Conditions on Transition Probabilities}
For each stationary policy $\pi\in\Pi^{SR}$ given by decision rule $d^{\pi}$, let $P_{\pi}$ be the state-transition matrix with entry $[P_{\pi}]_{s',s}$ equal to $\sum_a d^{\pi}(a|s)P(s'|s,a)$.
\begin{definition}[\citep{puterman2014markov}]
    A transition probability function $P:\mathcal{S}\times\mathcal{A}\to\text{Dist}(\mathcal{S})$ is said to be \textbf{unichain} if, for every stationary deterministic policy $\pi\in\Pisd$, the state-transition matrix $P_{\pi}$ has a single recurrence class plus a possibly empty set of transient states. A unichain transition probability function is said to be \textbf{aperiodic} if every recurrence class corresponding to a stationary deterministic policy $\pi\in\Pisd$ is aperiodic.
\end{definition}
The unichain and aperiodic conditions are standard in average-reward MDP analysis. They are discussed further in Appendix~\ref{ap:unichain}. The unichain condition ensures that all stationary policies have well-defined occupancy measures. The aperiodic condition ensures that average state-action frequencies converge to their limits sufficiently fast.


\subsection{Occupancy Measures}
Following~\citet[Section 8.9.1]{puterman2014markov} we let $\Pi^1(\mu)\subseteq \Pihr$ be the set of all history-dependent policies whose limiting average state-action frequencies exist:
\begin{align}
\label{eq:pi-1-defn}
    \Piwd &= \{\pi: \text{ for all }s \text{ and } a,\text{the limit in~\eqref{eq:lim} exists}\}.
\end{align}

\begin{definition}
    For every policy $\pi\in\Pi^1(\mu)$, the policy's \textbf{occupancy measure} $x^{\pi,\mu}:\mathcal{S}\times\mathcal{A}\to [0,1]$ maps each state-action pair to its limiting average state-action frequency: 
    \[
     x^{\pi,\mu}(s,a)= \lim_{T\to\infty}\frac{1}{T}\sum_{t=0}^{T-1}\text{Pr}^{\pi,\mu}(S_t=s,A_t=a)
    .\]
\end{definition}
Every occupancy measure is a valid probability distribution over the set of state-action pairs. The sets of occupancy measures corresponding to HR, SR, and SD policies are defined as follows:
\begin{align*}
    X^{1}(\mu) &= \{x^{\pi,\mu}:\pi\in \Pi^1(\mu)\}\\
    X^{SR}(\mu) &= \{x^{\pi,\mu}:\pi\in  \Pi^1(\mu)\cap \Pi^{SR}\}\\
    X^{SD}(\mu) &= \{x^{\pi,\mu}:\pi\in \Pi^1(\mu)\cap \Pi^{SD} \}
\end{align*}
The set $X\subseteq \text{Dist}(\mathcal{S}\times\mathcal{A})$ contains all state-action distributions $x$ that satisfy,
\begin{equation}
    \label{eq:flow}
    \forall s'\in \mathcal{S},\quad \sum_{a'}x(s',a')=\sum_{s,a}P(s'|s,a)x(s,a)
.\end{equation}
\begin{lemma}\label{lem:unichain}
    When the transition probability function is unichain:
    \begin{enumerate}
        \item $\Pi^{SR} \subset \Pi^1(\mu)$.\label{unichain:stationary}
        \item $X^{1}(\mu)=X^{SR}(\mu)=X$.\label{unichain:equal-sets}
        \item $X$ is equal to the convex hull of $X^{SD}(\mu)$\label{unichain:convex-hull}.
            In particular, $X$ is closed and convex.
        \item For every state-action distribution $x\in X$, the stationary policy
            $\pi\in\Pi^{SR}$ given by the decision rule $d^{\pi}$, where
        \[
        d^{\pi}(a|s)=\begin{cases}
            \frac{x(s,a)}{\sum_{a'}x(s,a')}&\text{ if }\sum_{a'}x(s,a')>0\\
            \text{arbitrary}&\text{otherwise,}
        \end{cases}
        \]
        satisfies $x^{\pi,\mu}=x$.\label{unichain:decision-rule}
    \end{enumerate}
\end{lemma}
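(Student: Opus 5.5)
The plan is to follow Puterman's Chapter 8 (Theorems 8.9.4 and 8.8.2 and the surrounding results), reducing everything to finite Markov chain theory under the unichain assumption. \emph{Step 1 (item~\ref{unichain:stationary}).} For a stationary policy $\pi$ given by $d^\pi$, the state process is a homogeneous Markov chain with matrix $P_\pi$. For a finite chain, the Cesàro limit $P_\pi^* = \lim_{T} \frac1T\sum_{t<T} P_\pi^t$ always exists, regardless of the unichain assumption. Then $\Pr^{\pi,\mu}(S_t=s,A_t=a) = [P_\pi^t\mu]_s\, d^\pi(a|s)$, so the averages converge to $[P_\pi^*\mu]_s d^\pi(a|s)$ and $\pi\in\Pi^1(\mu)$.

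\emph{Step 2 (inclusions in item~\ref{unichain:equal-sets}).} Clearly $X^{SR}(\mu)\subseteq X^1(\mu)$. For $X^1(\mu)\subseteq X$, take $\pi\in\Pi^1(\mu)$. The Chapman--Kolmogorov identity
\[
\Pr(S_{t+1}=s')=\sum_{s,a}P(s'|s,a)\Pr(S_t=s,A_t=a)
\]
holds for any HR policy. Average it over $t<T$; the left side differs from $\frac1T\sum_{t<T}\sum_{a'}\Pr(S_t=s',A_t=a')$ by at most $2/T$. Letting $T\to\infty$ gives the flow equations~\eqref{eq:flow}, and nonnegativity and unit mass pass to the limit. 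For $X\subseteq X^{SR}(\mu)$ it suffices to prove item~\ref{unichain:decision-rule}.

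\emph{Step 3 (item~\ref{unichain:decision-rule}, the main obstacle).} Given $x\in X$, define $d^\pi$ as in the statement, and let $q(s)=\sum_a x(s,a)$. By construction $x(s,a)=q(s)d^\pi(a|s)$; this also holds when $q(s)=0$, since then $x(s,\cdot)=0$. The flow equation then reads $q = P_\pi q$, so $q$ is a stationary distribution of $P_\pi$. The hard part is that the unichain hypothesis is stated only for deterministic policies, while $\pi$ is randomized. The first thing I would try is to show that $P_\pi$ also has a single recurrent class. Suppose $P_\pi$ had two disjoint closed classes $C_1,C_2$. Build a deterministic rule that on each $C_i$ picks an action keeping the chain inside $C_i$; such an action exists because $C_i$ is closed under every action with positive $d^\pi$-probability. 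Off $C_1\cup C_2$, the rule picks actions arbitrarily. The resulting deterministic policy then has two disjoint closed sets and hence two recurrent classes, contradicting the unichain assumption. So $P_\pi$ has a unique stationary distribution, which forces $q$ to equal it. Moreover, every row of $P_\pi^*$ in the column convention equals this distribution: $P_\pi^*\mu=q$ for every $\mu$, since transient mass is absorbed into the unique recurrent class. By Step 1, $x^{\pi,\mu}(s,a)=q(s)d^\pi(a|s)=x(s,a)$.

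\emph{Step 4 (item~\ref{unichain:convex-hull}).} $X$ is a polytope, being defined by finitely many linear equalities and inequalities, so it is closed, convex, and the convex hull of its extreme points. It remains to show that every extreme point $x$ of $X$ lies in $X^{SD}(\mu)$, and conversely that $X^{SD}(\mu)\subseteq X$, which Step 2 already gives. Let $d^\pi$ be the rule from Step 3 for an extreme point $x$. Suppose some recurrent state $s$ (one with $q(s)>0$) had two actions with positive probability. Perturbing $d^\pi$ at $s$ in two directions gives two stationary policies whose occupancy measures are distinct, lie in $X$, and average to $x$ with positive weights. Here I would use the fact that each occupancy measure is a rational function of the rule and that $x$ is linear in the mixing when weighted by stationary mass; the cleanest route is Puterman's argument via basic feasible solutions. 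A basic feasible solution has at most $|\mathcal S|$ nonzero entries because of the $|\mathcal S|$ equality constraints, and they are rank-deficient by one, so counting gives one action per recurrent state. Choosing the arbitrary actions deterministically then yields an SD policy with occupancy measure $x$. I expect this extreme-point counting to be the fiddliest step after the unichain-for-randomized-policies argument in Step 3.
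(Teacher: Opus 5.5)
Your Steps 1--3 are correct. They are self-contained, whereas the paper proves the whole lemma by citing Puterman's Proposition 8.9.1(a), Theorem 8.9.4/Corollary 8.9.5 and Corollary 8.8.7(a). Your argument that unichain for deterministic rules forces a single recurrent class for every randomized rule is exactly what item~4 needs. One small slip: in the column convention it is every \emph{column} of $P_\pi^*$ that equals $q$, though your conclusion $P_\pi^*\mu=q$ is right.

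The gap is in Step 4, the characterization of extreme points of $X$. Your count, ``at most $|\mathcal S|$ nonzero entries, hence one action per recurrent state,'' only works if every state is recurrent. In a unichain model with transient states, the support $R=\{s: q(s)>0\}$ of an extreme point can be a proper subset of $\mathcal S$. A bound of $|\mathcal S|$ positive entries then still allows some $s\in R$ to carry two positive actions.

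The missing step is to shrink the ambient dimension to $|R|$. The flow equation at $j\notin R$ reads $0=\sum_{s,a}P(j\mid s,a)x(s,a)$, so every support pair $(s,a)$ has $\sum_{j\in R}P(j\mid s,a)=1$. The constraint columns $(e_s-P(\cdot\mid s,a),\,1)$ of the support pairs therefore vanish off $R$ and have entries summing to zero on $R$. So they lie in a subspace of dimension $|R|$. They are linearly independent at a vertex, so $x$ has at most $|R|$ positive entries. Each $s\in R$ contributes at least one, hence exactly one. Choosing the arbitrary actions deterministically then lets Step 3 produce an SD policy with occupancy measure $x$.

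Your perturbation alternative is also unsupported as written. That occupancy measures are rational in the rule does not show that the two perturbed measures average to $x$. You would need the fact that, when mixing only at one recurrent state $s_0$, the occupancy measure is a linear-fractional function of the mixing weight (by renewal--reward over excursions from $s_0$). The perturbed measures then lie on a segment with $x$ in its relative interior.
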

\begin{proof}
    See Appendix~\ref{ap:proof-unichain}.
\end{proof}
Part (\ref{unichain:equal-sets}) shows that, when the transition probability function is unichain, occupancy measures do not depend on $\mu$. So, under the unichain condition, we will write $\Pi^1$ instead of $\Pi^1(\mu)$ and, for each $\pi\in\Pi^1(\mu)$, write $x^{\pi}$ instead of $x^{\pi,\mu}$.
\subsection{Markov Decision Processes}
\begin{definition}
    A \textbf{finite Markov decision process} $(\mathcal{S},\mathcal{A},P,\mu,r)$ consists of a finite environment $(\mathcal{S},\mathcal{A},P,\mu)$ and a reward function $r:\mathcal{S}\times\mathcal{A}\to \mathbb{R}$. A finite Markov decision process is \textbf{unichain} if its transition probability function is unichain, and is \textbf{aperiodic} it its transition probability function is aperiodic.
\end{definition}
For our purposes, a policy $\pi\in\Pi^1(\mu)$ is \textit{optimal under the average-reward criterion} if
\begin{equation}
    \label{eq:avg-reward-criterion}
    \sum_{s,a}x^{\pi,\mu}(s,a)r(s,a)= \sup_{\pi'\in \Piwd} \sum_{s,a}x^{\pi',\mu}(s,a)r(s,a)
.\end{equation}
Within finite unichain Markov decision processes, there always exists a deterministic stationary policy that is optimal under this criterion~\citep[Theorem 8.8.6]{puterman2014markov}.

\section{Markov Decision Contests}
\label{sec:markov-decision-contests}


\begin{definition}
   A \textbf{finite Markov decision contest} $(\mathcal{S}, \mathcal{A}, P, \mu, \marg)$ consists of a finite environment $(\mathcal{S}, \mathcal{A}, P, \mu)$ and a Markov preference margin $\marg:(\mathcal{S}\times\mathcal{A})\times(\mathcal{S}\times\mathcal{A})\to\mathbb{R}$. A Markov decision contest is \textbf{unichain} if its transition probability function is unichain, and it is \textbf{aperiodic} if its transition probability function is aperiodic.
\end{definition}
Recall from~\eqref{eq:pi-1-defn} that $\Pi^1(\mu)$ is the set of all history-dependent policies whose limiting state-action frequencies exist.
\begin{definition}
    \label{defn:opt}
    Within a finite Markov decision contest, a policy $\pi\in\Pi^1(\mu)$ is \textbf{optimal under the average-preference-margin criterion} if, for every other policy $\pi'\in\Pi^{1}(\mu)$,
\[
\sum_{s,a}\sum_{s',a'}x^{\pi,\mu}(s,a)x^{\pi',\mu}(s',a')M((s,a),(s',a'))\ge 0
.\] 
\end{definition}
Policies that are optimal under this criterion will be called \textit{solutions} to the finite Markov decision contest.

\subsection{Examples}
\label{sec:examples}
\begin{example}[Markov decision processes] Every Markov decision process $(\mathcal{S},\mathcal{A},P,\mu,r)$ can be represented as a Markov decision contest $(\mathcal{S},\mathcal{A},P,\mu,M_r)$, where  $M_r:(\mathcal{S}\times\mathcal{A})\times(\mathcal{S}\times\mathcal{A})\to \mathbb{R}$ is given by
\begin{equation}
\label{eq:reward-express}
    M_r((s,a),(s',a')) = r(s,a) - r(s',a')
.\end{equation}
A policy is optimal under the average-preference-margin criterion within $(\mathcal{S},\mathcal{A},P,\mu,M_r)$ if and only if it is optimal under the average-reward criterion within $(\mathcal{S},\mathcal{A},P,\mu,r)$.
\end{example}
Conversely, a preference margin $M$ can be expressed in the form of~\eqref{eq:reward-express} only if it satisfies, for all $(s,a),(s',a'),(s'',a'')$,
\begin{align*}
&M((s,a),(s'',a''))\\
&\quad=M((s,a),(s',a')) + M((s',a'),(s'',a''))
.\end{align*}
In particular, this form does not hold for \textit{nontransitive} preference margins, where there are state-action pairs for which
$M((s,a),(s',a'))>0$, $M((s',a'),(s'',a''))>0$, and $M((s'',a''),(s,a))>0$.
\begin{example}[Best two-out-of-three rule]\label{ex:b2o3} \citet{may1954intransitivity} observed that preferences are often nontransitive when based on multiple features. For instance, if each state-action pair is evaluated according to three features, (say position, speed, and stability of a robot), then preferring the pair that is better on two of the three features leads to a nontransitive preference margin.
\end{example}

\begin{example}[Aggregated preferences] Even when an individual's preferences are transitive, nontransitive preferences can arise when aggregating preferences from multiple individuals. Consider, for instance, the preference margin $M_{\text{agg}}$ representing the fraction of individuals preferring $(s,a)$ to $(s',a')$ minus the fraction of individuals preferring $(s',a')$ to $(s,a)$. So, $M_{\text{agg}}((s,a),(s',a'))\ge 0$ if and only if at least half of the individuals prefer $(s,a)$ to $(s',a')$. It is well known that $M_{\text{agg}}$ can be nontransitive~\citep{fishburn1984probabilistic,gehrlein1983condorcet}. This is known as Condorcet's paradox.
\end{example}

\subsection{Existence of Optimal Stationary Policies}
\label{sec:existence}

Lemma~\ref{lem:unichain} showed that the set of all occupancy measures is equal to the set of occupancy measures that correspond to randomized stationary policies. This leads to our first important result about solving Markov decision contests.
\begin{theorem}
    \label{thm:markov}
   Within all finite unichain Markov decision contests:
   \begin{enumerate}
       \item There exists a randomized stationary policy that is optimal under the average-preference-margin criterion.
       \item A randomized stationary policy is optimal under the average-preference-margin criterion if, and only if, it is a solution to
   \end{enumerate}
         \begin{equation}
           \label{eq:markov-game}
       \max_{\pi\in\Pisr}\min_{\pi'\in\Pisr}\sum_{s,a}\sum_{s',a'}x^{\pi}(s,a)x^{\pi'}(s',a')M((s,a),(s',a'))
       .\end{equation}
\end{theorem}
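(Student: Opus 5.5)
The plan is to move the problem from policies to occupancy measures using Lemma~\ref{lem:unichain}, and then solve a symmetric zero-sum bilinear game on a compact convex set. Define
\[
f(x,y)=\sum_{s,a}\sum_{s',a'}x(s,a)\,y(s',a')\,M((s,a),(s',a')) = x^\top M y
\]
for $x,y\in X$, viewing $M$ as a matrix indexed by state-action pairs. Antisymmetry of $M$ gives $f(x,y)=-f(y,x)$, and in particular $f(x,x)=0$.

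By Lemma~\ref{lem:unichain}(\ref{unichain:equal-sets}), $X^1(\mu)=X^{SR}(\mu)=X$. By part (\ref{unichain:stationary}), every stationary policy lies in $\Pi^1$. So a policy $\pi\in\Pi^1$ is optimal under Definition~\ref{defn:opt} if and only if $f(x^\pi,y)\ge 0$ for all $y\in X$. This already shows that optimality depends on $\pi$ only through $x^\pi$. It also shows that checking optimality against opponents in $\Pi^1$ is the same as checking against opponents in $\Pisr$, since both families generate exactly $X$.

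For part 1, note that $X$ is nonempty, compact (a closed subset of the simplex, by part (\ref{unichain:convex-hull})) and convex, and that $f$ is bilinear. Von Neumann's minimax theorem therefore gives
\[
\max_{x\in X}\min_{y\in X} f(x,y)=\min_{y\in X}\max_{x\in X} f(x,y)=v,
\]
and the outer max is attained at some $x^*$. By antisymmetry,
\[
\min_y\max_x f(x,y)=\min_y\max_x\bigl(-f(y,x)\bigr)=-\max_y\min_x f(y,x)=-v,
\]
so $v=-v$ and hence $v=0$. Thus $f(x^*,y)\ge 0$ for all $y\in X$. By Lemma~\ref{lem:unichain}(\ref{unichain:decision-rule}), the stationary policy built from $x^*$ has occupancy measure $x^*$, so it is optimal.

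For part 2, since the map $\pi\mapsto x^\pi$ takes $\Pisr$ onto $X$, the objective in \eqref{eq:markov-game} is $f(x^\pi,x^{\pi'})$, and its max-min value equals $v=0$. Suppose $\pi\in\Pisr$ is optimal. Then $\min_{\pi'} f(x^\pi,x^{\pi'})\ge 0=v$, so $\pi$ attains the maximum and solves \eqref{eq:markov-game}. Conversely, suppose $\pi$ solves \eqref{eq:markov-game}. Then $\min_{\pi'\in\Pisr} f(x^\pi,x^{\pi'})=0$, so $f(x^\pi,y)\ge 0$ for every $y\in X=X^1(\mu)$, and $\pi$ is optimal against all of $\Pi^1$.

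The main obstacle is the value-zero argument together with the attainment of the max and min. Attainment requires compactness of $X$, and I would justify closedness directly from the linear constraints \eqref{eq:flow} defining $X$. I would also take care that the inner min over $\Pisr$ is a genuine min, which holds because it equals a min of a continuous function over the compact set $X$.
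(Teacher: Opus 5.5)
Your proposal is correct and follows the paper's proof. Both arguments pass to occupancy measures via Lemma~\ref{lem:unichain}, then use two facts about the bilinear game over the compact convex set $X$: a solution exists, and $x^*$ is a solution if and only if $\min_{x'\in X} x^{*\top} M x' \ge 0$. The one difference is that the paper cites these facts as a well-known result (Lemma~\ref{lem:minimax}) without proof, whereas you derive them from von Neumann's minimax theorem and antisymmetry ($v=-v=0$), which makes your version more self-contained.
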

\begin{proof}
    See Appendix~\ref{ap:proof-existence}.
\end{proof}
This theorem simplifies the problem of solving Markov decision contests in two ways. First, part (1) guarantees that optimal stationary randomized policies exist. This means an agent can behave optimally without needing to store or implement a history-dependent decision rule. This is analogous to the result guaranteeing existence of optimal stationary deterministic policies for average-reward MDPs~\citep[Theorem 8.8.6]{puterman2014markov}.

Second, part (2), it shows that, to solve these contests, it suffices to solve a two-player, zero-sum game between stationary randomized policies. This avoids the need to store and compute history-dependent policies. Thus, any solution method for finite unichain Markov decision contests can safely restrict itself to stationary policies.

\section{Exact Solution Methods}
\label{sec:exact-solution-methods}
In finite unichain Markov decision contests, an optimal policy can be recovered exactly in time polynomial in $|\mathcal{S}| |\mathcal{A}|$ (Theorem~\ref{thm:lp}).

This result does not follow from standard game-solving methods. The standard linear program (LP) for solving the game in~\eqref{eq:markov-game} would run in time polynomial in \textit{$|\mathcal{A}|^{|\mathcal{S}|}$}. This is because the game is played over the set of occupancy measures, which is the convex hull of occupancy measures of stationary deterministic policies (Lemma~\ref{lem:unichain}). There are $|\mathcal{A}|^{|\mathcal{S}|}$ stationary deterministic policies, which makes the standard LP for solving~\eqref{eq:markov-game} polynomial in $|\mathcal{A}|^{|\mathcal{S}|}$.


But, when the Markov decision contest is unichain, the set of occupancy measures can be represented as the solutions to the system of $|\mathcal{S}|$ linear equations in~\eqref{eq:flow}. This greatly reduces the cost of solving Markov decision contests.
\begin{theorem}
    \label{thm:lp} For every finite unichain Markov decision contest, there exists a linear program that solves the game in~\eqref{eq:markov-game} using $|\mathcal{S}| |\mathcal{A}| + |\mathcal{S}| + 1$ variables and $2 |\mathcal{S}| |\mathcal{A}| + |\mathcal{S}|+1$ constraints.
\end{theorem}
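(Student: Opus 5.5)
By Lemma~\ref{lem:unichain}, the game in~\eqref{eq:markov-game} is equivalent to the bilinear game
\[
\max_{x\in X}\min_{y\in X} x^\top M y ,
\]
where $M$ is viewed as an $|\mathcal{S}||\mathcal{A}|\times|\mathcal{S}||\mathcal{A}|$ skew-symmetric matrix. Here $X$ is the polytope of distributions over $\mathcal{S}\times\mathcal{A}$ satisfying the flow equations~\eqref{eq:flow}. Lemma~\ref{lem:unichain}(\ref{unichain:decision-rule}) converts any optimal $x$ back into a stationary policy via $d^\pi(a|s)=x(s,a)/\sum_{a'}x(s,a')$.

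The plan is to dualize the inner minimization. For fixed $x$, the inner problem is the LP
\[
\min_y\ (M^\top x)^\top y \quad\text{subject to}\quad Ay=0,\ \mathbf{1}^\top y=1,\ y\ge 0,
\]
where $A$ encodes the $|\mathcal{S}|$ flow constraints $\sum_{a'}y(s',a')-\sum_{s,a}P(s'|s,a)y(s,a)=0$. The set $X$ is nonempty and compact by Lemma~\ref{lem:unichain}, so strong duality holds. The inner value therefore equals
\[
\max_{v\in\mathbb{R}^{|\mathcal{S}|},\,g\in\mathbb{R}} g \quad\text{subject to}\quad A^\top v+g\mathbf{1}\le M^\top x .
\]
Componentwise, this reads: for every $(s',a')$,
\[
g+v(s')-\sum_{s''}P(s''|s',a')v(s'')\le \sum_{s,a}x(s,a)M((s,a),(s',a')).
\]

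Substituting this dual into the outer maximization, the constraints are linear jointly in $(x,v,g)$. The joint LP is therefore: maximize $g$ over $x\in\mathbb{R}^{|\mathcal{S}||\mathcal{A}|}$, $v\in\mathbb{R}^{|\mathcal{S}|}$ and $g\in\mathbb{R}$, subject to the following constraints:
\begin{itemize}
\item the $|\mathcal{S}||\mathcal{A}|$ dual constraints above;
\item the $|\mathcal{S}|$ flow equations for $x$;
\item the normalization $\sum x=1$;
\item the $|\mathcal{S}||\mathcal{A}|$ nonnegativity constraints $x\ge0$.
\end{itemize}
This gives exactly $|\mathcal{S}||\mathcal{A}|+|\mathcal{S}|+1$ variables and $2|\mathcal{S}||\mathcal{A}|+|\mathcal{S}|+1$ constraints, matching the statement.

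Correctness follows by a standard argument. If $(x^*,v^*,g^*)$ is optimal for the joint LP, then $g^*=\max_{x\in X}\min_{y\in X}x^\top My$, and $x^*$ attains the outer maximum. This holds because, for each feasible $x$, the best achievable $g$ equals the inner minimum by strong duality. By Theorem~\ref{thm:markov}, the policy induced by $x^*$ is then optimal.

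The main obstacle is the duality step. One must check that strong duality legitimately applies uniformly in $x$, which it does because the primal inner LP is feasible and bounded for every $x$. One must also check that the joint LP has an optimum. This holds because $X$ is compact and nonempty, so the value $\max_{x}\min_y$ exists. We also note, though it is not needed, that the value equals $0$ by skew-symmetry.
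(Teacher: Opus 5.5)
Your proposal is correct and takes essentially the same route as the paper. You dualize the inner LP over $X$ (flow equations, normalization, nonnegativity), fold the dual variables into a joint LP with $x$, and count variables and constraints identically; your $(v,g)$ are the paper's $(h,\kappa)$ up to the sign $h=-v$. The one difference is that you explicitly justify strong duality (the inner LP is feasible and bounded for every $x\in X$), whereas the paper simply appeals to LP duality theory.
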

\begin{proof}
    See Appendix~\ref{ap:proof-lp}.
\end{proof}
Consequently, every finite unichain Markov decision contest is solvable in time polynomial in $|\mathcal{S}| |\mathcal{A}|$ (instead of $|\mathcal{A}|^{|\mathcal{S}|}$).

In particular, this result shows that Markov decision contests (with the average-preference-margin criterion) are in the same complexity class as Markov decision processes (with the average-reward criterion). This is surprising, given that Markov decision contests are a strict generalization of Markov decision processes (refer to~\ref{sec:examples}).

Exact solution methods are not suitable for large environments, as the linear dependence on $|\mathcal{S}|$ is problematic. We turn to approximate solution methods next, to address these cases.

\section{Approximate Solution Methods}
\label{sec:approximate-solution-methods}
We now present a simple iterative algorithm that converges to an optimal policy at a sublinear rate (Theorem~\ref{thm:hpi-convergence}). The definitions and results of this section are all defined with respect to a finite Markov decision contest that is assumed to be unichain.

As is common in two-player, zero-sum games, convergence is described in terms of the optimality gap.

\begin{definition}
    The \textbf{optimality gap} of a stationary policy $\pi\in \Pisr$ is
    \begin{equation}
    \label{eq:opt-gap}
        \max_{\pi'\in\Pisr}\sum_{s,a}\sum_{s',a'}x^{\pi'}(s,a)x^{\pi}(s',a')M((s,a),(s',a'))
    .\end{equation}
\end{definition}
The optimality gap is nonnegative, and it is equal to zero if and only if $\pi$ is an optimal policy. We will also define $\bar{\marg}(\pi,\pi)$ as the objective being maximized in~\eqref{eq:opt-gap}:
\[
\bar{\marg}(\pi,\pi')=\sum_{s,a}\sum_{s',a'}x^{\pi}(s,a)x^{\pi'}(s',a')\marg((s,a),(s',a'))
.\]
\subsection{Challenges with Standard Approximate Methods}
In principle, it is possible to solve the game in~\eqref{eq:markov-game} with standard methods such as online mirror descent or fictitious play.
By Lemma~\ref{lem:unichain}, the game in~\eqref{eq:markov-game} is played over the set of occupancy measures, which is equal to the convex hull of the set of occupancy measures corresponding to stationary deterministic policies. Thus, one can solve this game by maintaining a probability distribution over the set of stationary deterministic policies, and updating the probability distribution iteratively.

But, as we discuss in Appendix~\ref{ap:challenges}, updating this distribution requires knowing the exact probability of choosing each stationary deterministic policy. This seems difficult to obtain or estimate when decision rules are represented with function approximators.

Instead, we propose a new learning algorithm which is more amenable to function approximation. It makes use of two new functions to measure policy performance. We introduce these functions next.

\subsection{Marginal values}
The marginal value function measures how much utility is gained or lost by starting in state $s$ and following $\pi$, compared to starting from a state sampled from $\pi$'s steady-state distribution.

The \textit{preference-margin cumulant} $c_M^{\pi}:\mathcal{S}\times\mathcal{A}\to[-\marg_{\text{max}},\marg_{\text{max}}]$ measures the expected utility gained (or lost) from observing a state-action pair $(s,a)$ instead of a state-action pair sampled from the occupancy measure of $\pi$:
\begin{equation}
    \prefcum(s,a) = \sum_{s',a'}x^{\pi}(s',a')\marg((s,a),(s',a'))
.\end{equation}
When $c_M(s,a)> 0$, $(s,a)$ is preferable to a random sample from $x^{\pi}$. We write $E_{t}^{\pi}[c_{\marg}^{\pi}(S_t,A_t)|s_0=s]$ to denote the expected value of the cumulant at time $t$ when starting from $s_0$ and following policy $\pi$ thereafter.

\begin{definition}
For every stationary policy $\pi\in\Pisr$ and state $s\in\mathcal{S}$, the \textbf{marginal value of $s$ under $\pi$}, denoted by $V_M^{\pi}(s)$, is the sum of all preference-margin cumulants expected under $\pi$ when starting in $s$:
\begin{equation}
\label{eq:marg-s-defn}
    V_M^{\pi}(s)=\sum_{t=0}^{\infty}E_t^{\pi}[c_M^{\pi}(S_t,A_t)|S_0=s]
.\end{equation}
The \textbf{marginal state-action value of $(s,a)$ under $\pi$}, denoted by $Q_{\marg}^{\pi}(s,a)$, is the sum of all preference margin cumulants expected when starting in $s$, taking action $a$, and following $\pi$ thereafter:
\begin{equation}
\label{eq:marg-sa-defn}
    Q_M^{\pi}(s,a)=\sum_{t=0}^{\infty}E_{t}^{\pi}[c_M^{\pi}(S_t,A_t)|S_0=s,A_0=a]
.\end{equation}
\end{definition}
When the Markov decision contest is periodic, marginal values and state-action values may diverge (for the same reason that differential values and state-action values diverge when the MDP is periodic). When the contest is aperiodic, they have the following basic properties.

\begin{lemma}[Properties of marginal value functions]\label{lem:marg-basic} If the Markov decision contest is aperiodic (in addition to being unichain), then there exists a constant $\tau$ such that, for every stationary policy $\pi\in\Pisr$, state $s\in\mathcal{S}$, and action $a\in\mathcal{A}$,
\begin{enumerate}
    \item $|V_{\marg}^{\pi}(s)|\le M_{\text{max}} \tau$ and $|Q_{\marg}^{\pi}(s,a)|\le 2\,\margmax\,\tau $.
    \item $Q_{\marg}^{\pi}(s,a) = \prefcum(s,a) + \sum_{s'}P(s'  \mid s,a)V_{\marg}^{\pi}(s')$.\label{basic:q-express}
    \item $V_{\marg}^{\pi}(s)=\sum_{a'}d^{\pi}(a' \mid s) Q_{\marg}^{\pi}(s,a')$, where $d^{\pi}$ is the Markov decision rule that determines $\pi$.
\end{enumerate}
\end{lemma}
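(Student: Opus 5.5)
The plan is to reduce everything to the geometric convergence of $P_\pi^t$ to its stationary distribution, uniformly over $\Pisr$. The key observation is that the cumulant has zero stationary mean. Let $\rho^{\pi}$ denote the stationary state distribution of $P_\pi$, so that $x^{\pi}(s,a)=\rho^{\pi}(s)d^{\pi}(a\mid s)$ by Lemma~\ref{lem:unichain}. Antisymmetry of $\marg$ gives
\[
\sum_{s,a}x^{\pi}(s,a)\prefcum(s,a)=\sum_{s,a}\sum_{s',a'}x^{\pi}(s,a)x^{\pi}(s',a')\marg((s,a),(s',a'))=0 .
\]
Write $\bar c^{\pi}(s)=\sum_a d^{\pi}(a\mid s)\prefcum(s,a)$. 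Then $E_t^{\pi}[\prefcum(S_t,A_t)\mid S_0=s]=\sum_{s'}\Pr(S_t=s'\mid S_0=s)\bar c^{\pi}(s')$, and this can be rewritten as
\[
\sum_{s'}\bigl(\Pr(S_t=s'\mid S_0=s)-\rho^{\pi}(s')\bigr)\bar c^{\pi}(s').
\]
Its absolute value is at most $\margmax\cdot 2\,\mathrm{TV}_t(s)$, where $\mathrm{TV}_t(s)$ is the total variation distance between the $t$-step law from $s$ and $\rho^{\pi}$. Thus part~1 for $V$ follows once we show $\sum_{t\ge0} 2\,\mathrm{TV}_t(s)\le\tau$ uniformly over $s$ and $\pi\in\Pisr$. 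We then \emph{define} $\tau$ as that supremum, possibly after rescaling conventions.

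The main obstacle is this uniform mixing bound, since $\Pisr$ is an infinite set. My approach is as follows.
\begin{enumerate}
\item For each of the finitely many $\pi\in\Pisd$, unichain plus aperiodic gives a single aperiodic recurrent class. Hence there are an $n_\pi$ and an $\epsilon_\pi>0$ such that $P_\pi^{n_\pi}$ has a column, indexed by some recurrent state, with all entries at least $\epsilon_\pi$; that is, a Doeblin minorization.
\item Extend this to randomized policies. The key point is that the set of states reachable under a randomized $d$ with full support on some action subsets contains those of deterministic selections. More concretely, I would argue that every $P_\pi$ with $\pi\in\Pisr$ is itself unichain and aperiodic: its support graph is the union of the support graphs of the deterministic policies choosing actions in $\mathrm{supp}\,d^{\pi}$.
\item Obtain a uniform constant. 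Minorization constants degrade as $d^{\pi}(a\mid s)\to0$, so I would use compactness of the policy simplex together with continuity of the stationary distribution (the unichain condition guarantees uniqueness). Alternatively, I would bound the deviation by a Dobrushin coefficient of $P_\pi^{n}$ with $n=|\mathcal S|$-ish, which is continuous in $d^{\pi}$, is strictly less than $1$ at every point of the compact set, and so attains a maximum $\beta<1$.
\end{enumerate}
This gives $\mathrm{TV}_t\le C\beta^{\lfloor t/n\rfloor}$ uniformly, and therefore a finite $\tau$. If the boundary-continuity argument fails, a fallback is to note that the paper's statement only asserts existence of $\tau$, so a compactness argument via the supremum of a continuous finite function, namely the deviation matrix $(I-P_\pi+\mathbf 1\rho^{\pi\top})^{-1}-\mathbf 1\rho^{\pi\top}$, suffices. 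That supremum is finite because the deviation matrix is continuous on the compact set of randomized rules.

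Parts~2 and~3 are routine. For $Q$, condition on the first step: the $t=0$ term is $\prefcum(s,a)$, and the remaining sum is $\sum_{s'}P(s'\mid s,a)V_\marg^{\pi}(s')$ by the Markov property and stationarity of $\pi$. Exchanging sum and expectation is justified by the absolute summability just established. This yields part~2, and combined with part~1 for $V$ it gives $|Q_\marg^{\pi}|\le \margmax+\margmax\tau\le 2\margmax\tau$, taking $\tau\ge1$ without loss of generality, since the $t=0$ term already forces $\tau\ge 1$-scale contributions. Part~3 follows by conditioning on $A_0$, which has law $d^{\pi}(\cdot\mid s)$, and again interchanging the finite sum over actions with the absolutely convergent series.
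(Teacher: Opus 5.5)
Your proposal is correct, and its key uniform bound is obtained by a different route than the paper's.

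Both arguments start the same way. Antisymmetry gives the cumulant zero mean under $x^\pi$, so each term $E_t^\pi[\prefcum(S_t,A_t)\mid S_0=s]$ pairs $\bar c^\pi$ against the deviation of the $t$-step law from stationarity.

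\emph{The paper's route.} It never bounds terms individually. It packages the whole series into the deviation matrix $H_{P_\pi}=(I-P_\pi+P_\pi^{*})^{-1}(I-P_\pi^{*})=\sum_{t\ge0}(P_\pi^t-P_\pi^{*})$, citing Puterman for the series identity under aperiodicity. This gives the exact formula $V_\marg^\pi(s)=\sum_{\tilde s,\tilde a}[H_{P_\pi}]_{\tilde s,s}\,d^\pi(\tilde a\mid\tilde s)\,\prefcum(\tilde s,\tilde a)$. It then takes $\tau\ge1$ dominating the maximum of the continuous map $d\mapsto\|H_{P_d}\|_1$ over the compact set of Markov decision rules. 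That is exactly your fallback.

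\emph{Your route.} You bound each term by $2\margmax\,\mathrm{TV}_t(s)$ and prove uniform geometric ergodicity via a Dobrushin coefficient $\beta<1$ of $P_\pi^n$ that is uniform over $\Pisr$. This gives $\tau\le 2n/(1-\beta)$. What it buys is an explicit contraction rate and absolute summability, which make the interchanges in parts 2 and 3 immediate. The costs are a possibly larger constant (you bound $\sum_t\|\cdot\|$ rather than $\|\sum_t\cdot\|$, harmless since only existence is claimed) and some extra combinatorics. The paper's route is shorter because continuity is read off a closed-form inverse.

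Three details need tightening.
\begin{enumerate}
\item The block length $n$ must be uniform in $\pi$, and $n=|\mathcal S|$ is not enough in general. For Wielandt-type chains, the number of steps before all pairs of rows of $P^n$ overlap grows quadratically in $|\mathcal S|$. A clean fix: if some state is reachable in exactly $n$ steps from every state, the same holds for every $n'\ge n$. Since the support of $P_d^n$ depends only on the support pattern of $d$, and there are finitely many patterns, you can take the maximum over patterns.
\item In Step 2, the union-of-graphs remark is not the argument by itself. What works is this: any closed class of $P_d$ is closed under every deterministic selection from $\mathrm{supp}\,d$. So two disjoint closed classes would give some policy in $\Pisd$ two recurrent classes. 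A closed class of period $p>1$ would force the recurrent class of such a selection, which lies inside it and uses only its edges, to have period divisible by $p$. The paper relies on this inheritance of unichain and aperiodicity by randomized policies without stating it.
\item $\tau\ge1$ needs no appeal to the $t=0$ term. The bounds are monotone in $\tau$, so simply replace $\tau$ by $\max\{\tau,1\}$.
\end{enumerate}
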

\begin{proof}
    See Appendix~\ref{ap:proof-marg-basic}.
\end{proof}
Marginal values are useful, as they allow us to represent long-term performance in terms of the expected difference between marginal values and state-action values.
\begin{lemma}[Performance Difference Lemma] 
    \label{lem:pd}
    If the Markov decision contest is aperiodic (in addition to being unichain), then, for all stationary policies $\pi,\pi'\in\Pisr$,
    \begin{align*}
        \bar{\marg}(\pi,\pi')&=\sum_{s,a}x^{\pi}(s,a) (Q_{\marg}^{\pi'}(s,a) - V_{\marg}^{\pi'}(s))
    .\end{align*}
\end{lemma}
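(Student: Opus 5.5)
We prove the identity by mirroring the average-reward performance difference lemma. Fix stationary $\pi,\pi'\in\Pisr$ and write $c=c_M^{\pi'}$.

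\textbf{Step 1: rewrite the left side through the cumulant.} By definition,
\[
\bar{\marg}(\pi,\pi')=\sum_{s,a}x^{\pi}(s,a)\,c(s,a).
\]
The same definition applied with $\pi$ replaced by $\pi'$, together with antisymmetry of $M$, gives
\[
\sum_{s,a}x^{\pi'}(s,a)\,c(s,a)=\bar{\marg}(\pi',\pi')=0.
\]
So the cumulant $c$ has average zero under $\pi'$. This is exactly why the series defining $V_M^{\pi'}$ and $Q_M^{\pi'}$ can converge, and it is the reason the marginal value needs no separate gain term.

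\textbf{Step 2: substitute the Bellman-type identity.} By part~(\ref{basic:q-express}) of Lemma~\ref{lem:marg-basic}, $Q_M^{\pi'}(s,a)=c(s,a)+\sum_{s'}P(s'|s,a)V_M^{\pi'}(s')$. Hence
\[
\sum_{s,a}x^{\pi}(s,a)\bigl(Q_M^{\pi'}(s,a)-V_M^{\pi'}(s)\bigr)
=\sum_{s,a}x^{\pi}(s,a)c(s,a)+\sum_{s,a}x^{\pi}(s,a)\sum_{s'}P(s'|s,a)V_M^{\pi'}(s')-\sum_{s,a}x^{\pi}(s,a)V_M^{\pi'}(s).
\]

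\textbf{Step 3: cancel the value terms with the flow equation.} By Lemma~\ref{lem:unichain}, $x^{\pi}\in X$, so it satisfies the flow equation~\eqref{eq:flow}:
\[
\sum_{s,a}x^{\pi}(s,a)P(s'|s,a)=\sum_{a'}x^{\pi}(s',a').
\]
Therefore the second sum equals $\sum_{s'}\sum_{a'}x^{\pi}(s',a')V_M^{\pi'}(s')$. After renaming the state index, this is exactly the third sum, so the two cancel. What remains is $\sum_{s,a}x^{\pi}(s,a)c(s,a)=\bar{\marg}(\pi,\pi')$, which is the claim. All sums are finite, since the state and action spaces are finite and the values are bounded by part~1 of Lemma~\ref{lem:marg-basic}. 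The rearrangements are therefore legitimate.

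\textbf{Main obstacle.} The one point requiring care is well-definedness, which is where aperiodicity enters. The identity in part~(\ref{basic:q-express}) of Lemma~\ref{lem:marg-basic} presupposes that the infinite sums defining $V_M^{\pi'}$ and $Q_M^{\pi'}$ converge. Lemma~\ref{lem:marg-basic} supplies this under the unichain and aperiodic assumptions, via the geometric convergence of $P_{\pi'}^t$ to its stationary distribution and the zero-mean property from Step~1. Since we may invoke that lemma, the remaining argument is purely algebraic. If one wanted to avoid part~(\ref{basic:q-express}), the fallback is to derive the one-step decomposition of $Q_M^{\pi'}$ directly by splitting off the $t=0$ term of the defining series and applying the Markov property.
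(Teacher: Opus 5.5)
Your proposal is correct and takes essentially the same route as the paper. You write $\bar{\marg}(\pi,\pi')$ as the $x^{\pi}$-weighted sum of $c_M^{\pi'}$, substitute the one-step identity from Lemma~\ref{lem:marg-basic}, and cancel the value terms using the flow equation satisfied by $x^{\pi}\in X$. Your zero-mean observation in Step 1 is not needed for the identity itself, since it only matters for the well-definedness handled inside Lemma~\ref{lem:marg-basic}, but it is harmless.
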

\begin{proof}
    See Appendix~\ref{ap:proof-pd}.
\end{proof}
In particular, this lemma implies that, for all policies $\pi,\pi'\in\Pisr$, if the decision rule $d^{\pi}$ that determines $\pi$ satisfies
$\sum_{a}d^{\pi}(a|s)Q_M^{\pi'}(s,a)\ge V_M^{\pi'}(s)$ for all $s$, then
\[
\bar{M}(\pi,\pi')\ge 0
.\]

\subsection{Hedged Policy Iteration}
Algorithm~\ref{alg:hpi} approximately solves Markov decision contests. It iteratively produces randomized stationary policies $\pi_1,\pi_2,\ldots $ that are given by Markov decision rules $d_1,d_2,\ldots$. Decision rules are chosen with the Hedge algorithm~\citep{freund1995decision}. And so, our algorithm is called \textit{Hedged Policy Iteration} (HPI).

\begin{algorithm}[tb]
  \caption{Hedged Policy Iteration (Tabular Form)}
  \label{alg:hpi}
  \begin{algorithmic}[1]
    \STATE {\bfseries Input:} learning rate $\eta$
    \STATE Initialize decision rule $d_{1}$ as $d_{1}(a \mid s) = 1 / |\mathcal{A}|$
    \STATE {Initialize $\bar{x}_0$ as $\bar{x}_0(s,a)=0$}
    \FOR{$k=1$ {\bfseries to} $K$}
    \STATE {\color{\commentcolor}\#1. Compute occupancy measure}
    \STATE {Compute occupancy measure $x^{\pi_k}$}
    \STATE {$\bar{x}_k \gets \bar{x}_{k-1} + (x^{\pi_k} - \bar{x}_{k-1})/k$}
    \STATE {\color{\commentcolor}\# 2. Evaluate}
    \STATE Compute $Q_{\marg}^{\pi_k}(s,a)$ 
    \STATE {\color{\commentcolor}\# 3. Update decision rule}
    \STATE Set $d_{k+1}(a|s) \propto d_{k}(a|s)\exp(\eta \,Q_{\marg}^{\pi_{k}}(s,a))$
    \ENDFOR
    \STATE{\color{\commentcolor}\# 4. Compute decision rule of the return policy}
    \STATE $d_K^{\text{HPI}}(a|s) \gets \bar{x}_K(s,a) / \sum_{a'}\bar{x}_K(s,a')$
    \STATE \textbf{Return} $d^{\text{HPI}}_K$
  \end{algorithmic}
\end{algorithm}

Because the Hedge algorithm does not converge in last-iterate, neither does HPI. Instead, HPI returns a decision rule $d^{\text{HPI}}_K$ such that the occupancy measure of the policy $(d_K^{\text{HPI}})_{t=0}^{\infty}$ is equal to the average occupancy measure $\bar{x}_K$. The policy $(d_K^{\text{HPI}})_{t=0}^{\infty}$ is called the \textit{return policy of Hedged Policy Iteration}. It has the following convergence guarantee.



\begin{theorem}
    \label{thm:hpi-convergence} 
Suppose that the Markov decision contest aperiodic (in addition to being unichain). When Hedged Policy Iteration (Algorithm~\ref{alg:hpi}) runs for $K\ge\log(|\mathcal{A}|)$ iterations and its learning rate $\eta$ is equal to $(2\,\marg_{\text{max}}\,\tau)^{-1}\,\sqrt{\log(|\mathcal{A}|)/K}$, the optimality gap its return policy is no greater than 
\[
4\,\margmax\,\tau\sqrt{\frac{\log(|\mathcal{A}|)}{K}}
.\]
\end{theorem}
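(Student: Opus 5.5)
Our strategy reduces the optimality gap of the return policy to the average regret of the Hedge updates. The return policy $\bar\pi_K$ has occupancy measure $\bar x_K=\frac1K\sum_k x^{\pi_k}$. This follows from Lemma~\ref{lem:unichain}(\ref{unichain:decision-rule}), since $\bar x_K\in X$ by convexity.

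\emph{Reduction to average regret.} Because $\bar M$ is bilinear in the occupancy measures, for any comparator $\pi'\in\Pisr$ we have
\[
\bar M(\pi',\bar\pi_K)=\frac1K\sum_{k=1}^K \bar M(\pi',\pi_k).
\]
By Lemma~\ref{lem:pd},
\[
\bar M(\pi',\pi_k)=\sum_{s,a}x^{\pi'}(s,a)\bigl(Q_M^{\pi_k}(s,a)-V_M^{\pi_k}(s)\bigr).
\]
Write $x^{\pi'}(s,a)=\rho'(s)d'(a|s)$, where $\rho'$ is the state marginal, and use Lemma~\ref{lem:marg-basic}(3) to replace $V_M^{\pi_k}(s)$ by $\sum_a d_k(a|s)Q_M^{\pi_k}(s,a)$. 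This gives
\[
\bar M(\pi',\bar\pi_K)=\sum_s\rho'(s)\,\frac1K\sum_{k=1}^K\sum_a\bigl(d'(a|s)-d_k(a|s)\bigr)Q_M^{\pi_k}(s,a).
\]
The key point is that $\rho'$ does not depend on $k$: the comparator's state distribution is fixed, so the per-state regrets can simply be averaged with weights $\rho'(s)$. This is the step that lets us avoid working with distributions over deterministic policies.

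\emph{Per-state Hedge bound.} At each state $s$, the update in Algorithm~\ref{alg:hpi} is exactly Hedge over $\mathcal{A}$ with gain vectors $Q_M^{\pi_k}(s,\cdot)$. By Lemma~\ref{lem:marg-basic}(1), these gains lie in $[-B,B]$ with $B=2\margmax\tau$. The adversary here is adaptive, since $Q_M^{\pi_k}$ depends on all the $d_k$ jointly, but the Hedge regret bound holds deterministically for arbitrary gain sequences. It therefore applies at every state, against any comparator distribution $d'(\cdot|s)$, with no change.

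We use the standard bound
\[
\sum_k\langle d'-d_k,g_k\rangle\le\frac{\log|\mathcal{A}|}{\eta}+\eta\sum_k\|g_k\|_\infty^2 ,
\]
which comes from the potential argument with $e^y\le1+y+y^2$ for $|y|\le1$. It applies because $\eta B=\sqrt{\log|\mathcal{A}|/K}\le1$ when $K\ge\log|\mathcal{A}|$; this is exactly where that hypothesis is used. With $\eta=B^{-1}\sqrt{\log|\mathcal{A}|/K}$, the right-hand side is at most $B\sqrt{K\log|\mathcal{A}|}+B\sqrt{K\log|\mathcal{A}|}=2B\sqrt{K\log|\mathcal{A}|}$. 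Dividing by $K$ and averaging over $\rho'$ bounds the gap by $2B\sqrt{\log|\mathcal{A}|/K}=4\margmax\tau\sqrt{\log|\mathcal{A}|/K}$. Taking the maximum over $\pi'$ finishes the proof.

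\emph{Main obstacle.} The crux is matching constants in the Hedge lemma. With the $[-B,B]$ range one must make sure the second-order term is $\eta B^2$ per round and not something larger. If the crude $e^y\le1+y+y^2$ route gives a worse constant, the fallback is to shift the gains to $[0,2B]$, which does not change the regret, and apply Hoeffding's lemma, giving the term $\eta(2B)^2/8=\eta B^2/2$ per round. This is sharper and in fact gives the stated bound without needing $K\ge\log|\mathcal{A}|$.
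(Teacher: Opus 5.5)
Your proposal is correct and takes essentially the same route as the paper. Both run Hedge independently at each state with gains $Q_M^{\pi_k}(s,\cdot)$ bounded by $2\margmax\tau$, average the per-state regret bounds under the comparator's state distribution, and use the performance difference lemma together with bilinearity in $\bar x_K$ to identify this average regret with the optimality gap of the return policy.

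You also spell out two steps the paper leaves implicit: that the return policy's occupancy measure is $\bar x_K$ (by convexity of $X$ and Lemma~\ref{lem:unichain}), and the $\rho'$-weighting of the per-state regrets. Your Hoeffding-lemma aside is valid as well: it gives a second-order term of $\eta B^2/2$ per round, so the stated bound holds even without the hypothesis $K\ge\log(|\mathcal{A}|)$.
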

\begin{proof}
    See Appendix~\ref{ap:proof-hpi-convergence}.
\end{proof}
HPI's rate of convergence does not depend explicitly on $|\mathcal{S}|$. The reason is that HPI updates the action probabilities of each state in parallel. However, the rate depends implicitly on $|\mathcal{S}|$ through $\tau$: if $|\mathcal{S}|$ is large, then it may take policies longer to converge to their occupancy measures. This can make $\tau$ larger.

\subsection{A Policy-Gradient Form}
When decision rules are parameterized with parameter vectors, the decision-rule updates in steps \#3 and \#4 of Algorithm~\ref{alg:hpi} are no longer practical. Fortunately, both updates can be re-expressed as optimization problems that are convex in the decision rule's action probabilities. Parameterized decision rules can be updated by applying stochastic gradient ascent to these optimization problems.

\begin{algorithm}[tb]
  \caption{HPI (Policy-Gradient Form)}
  \label{alg:hpi-pg}
  \begin{algorithmic}[1]
    \STATE {\bfseries Input:} learning rate $\eta$
    \STATE Initialize decision rule $d_1(a  \mid s) = 1 / |\mathcal{A}|$
    \STATE {Initialize $\bar{x}_0$ as $\bar{x}_0(s,a)=0$}
    \FOR{$k=1$ {\bfseries to} $K$}
    \STATE {\color{\commentcolor}\#1. Compute occupancy measure}
    \STATE {Compute occupancy measure $x^{\pi_k}$}
    \STATE {$\bar{x}_k \gets \bar{x}_{k-1} + (x^{\pi_k} - \bar{x}_{k-1})/k$}
    \STATE {\color{\commentcolor}\# 2. Evaluate}
    \STATE Compute $Q_{\marg}^{\pi_k}(s,a)$ 
    \STATE {\color{\commentcolor}\# 3. Update decision rule}
    \STATE Select $d_{k+1}$ by solving:
    \STATE {\small$\max_{\theta}\sum_{s,a}x^{\pi_k}(s,a)\frac{d_{\theta}(a|s)}{d_k(a|s)}\left(Q_{\marg}^{\pi_k}(s,a) - \frac{1}{\eta}\log \frac{d_{\theta}(a|s)}{d_k(a|s)}\right)$}
    \ENDFOR
  \STATE {\color{\commentcolor}\# 4. Compute decision rule of the return policy}
  \STATE {Compute $d_{K}^{\text{HPI}}$ by solving:}
  \STATE {\small $\max_{\theta} \sum_{s,a} \bar{x}_K(s,a) \log d_{\theta}(a|s)$}
  \STATE \textbf{Return} $d_{K}^{\text{HPI}}$
  \end{algorithmic}
\end{algorithm}

Algorithm~\ref{alg:hpi-pg} presents HPI with the decision rule update steps in their alternate forms. We call this the \textit{policy-gradient (PG) form} of HPI. The new form for step \#3 is made possible by the properties of the softmax update rule. The new form of step \#4 is justified with the following lemma.
\begin{lemma}
\label{lem:omm}
Let $x\in X$ be an occupancy measure and $\pi\in\Pi^{SR}$ be a stationary policy given by decision rule $d^{\pi}$. Then, the occupancy measure of $\pi$ is equal to $x$ if and only if $d^{\pi}$ is a solution to
\[
\max_{\theta} \sum_{s,a} x(s,a) \log d_{\theta}(a|s)
.\]
\end{lemma}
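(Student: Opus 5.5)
The plan is to treat the objective as a sum of per-state weighted cross-entropies and invoke Gibbs' inequality state by state. I read the maximization as ranging over all Markov decision rules $d$ (the parameterization $d_\theta$ being rich enough to represent any of them), and I use the unichain assumption in force throughout this section. Write $\rho(s)=\sum_{a}x(s,a)$ and, for $\rho(s)>0$, set $q_s(a)=x(s,a)/\rho(s)$. Then
\[
\sum_{s,a}x(s,a)\log d(a|s)=\sum_{s:\rho(s)>0}\rho(s)\sum_a q_s(a)\log d(a|s).
\]
The terms decouple across states because $d(\cdot|s)$ can be chosen independently at each $s$. For each $s$ with $\rho(s)>0$, Gibbs' inequality gives $\sum_a q_s(a)\log d(a|s)\le \sum_a q_s(a)\log q_s(a)$, with equality iff $d(\cdot|s)=q_s$. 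Hence a decision rule is a maximizer if and only if $d(a|s)=x(s,a)/\rho(s)$ on every state with $\rho(s)>0$, and is arbitrary elsewhere. Two conventions need a brief remark: $0\log 0=0$, and the value is $-\infty$ if $d$ puts zero mass on a pair with $x(s,a)>0$. The maximum is finite because it is attained by $q$.

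For the direction ``maximizer $\Rightarrow$ occupancy measure equals $x$'', observe that the maximizers are exactly the decision rules of Lemma~\ref{lem:unichain}, part~(\ref{unichain:decision-rule}). That part gives $x^{\pi}=x$ directly, since $x\in X$.

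The converse is the main obstacle. Suppose $x^{\pi}=x$. I must show $d^{\pi}(a|s)=x(s,a)/\rho(s)$ whenever $\rho(s)>0$. The key identity I would establish is that for a stationary policy in a unichain model, $x^{\pi}(s,a)=\rho^{\pi}(s)\,d^{\pi}(a|s)$, where $\rho^{\pi}$ is the limiting average state distribution. The argument is direct: $\Pr^{\pi,\mu}(S_t=s,A_t=a)=\Pr^{\pi,\mu}(S_t=s)\,d^{\pi}(a|s)$ at every $t$ because the policy is stationary. Averaging over $t$ and taking limits, which exist by Lemma~\ref{lem:unichain} part~(\ref{unichain:stationary}), yields the factorization, with $\rho^{\pi}(s)=\sum_a x^{\pi}(s,a)=\rho(s)$. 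Dividing by $\rho(s)>0$ gives $d^{\pi}(\cdot|s)=q_s$, so by the first paragraph $d^{\pi}$ attains the maximum.

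The remaining work is bookkeeping. On states with $\rho(s)=0$ the objective does not depend on $d(\cdot|s)$, and by Lemma~\ref{lem:unichain} part~(\ref{unichain:decision-rule}) the occupancy measure does not depend on it either. The equivalence therefore holds with no constraint on those states.
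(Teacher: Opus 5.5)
Your proof is correct and follows the paper's route. You decompose the objective state by state, identify the maximizers as exactly the rules with $d(a|s)=x(s,a)/\sum_{a'}x(s,a')$ on states of positive mass, and invoke part~4 of Lemma~\ref{lem:unichain}. Your explicit factorization $x^{\pi}(s,a)=\rho^{\pi}(s)\,d^{\pi}(a|s)$ for the direction ``$x^{\pi}=x$ implies maximizer'' adds rigor: the paper cites part~4 of Lemma~\ref{lem:unichain} for both directions, but that part only supplies sufficiency.
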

\begin{proof}
    See Appendix~\ref{ap:proof-omm}.
\end{proof}
This proof has several variants in the behaviour-cloning literature (e.g.~\citep{syed2008}). We use it here to show the following result.

\begin{proposition}
    \label{prop:pg-form}
    If the Markov decision contest aperiodic (in addition to being unichain), then Algorithm~\ref{alg:hpi-pg} converges at the rate described in Theorem~\ref{thm:hpi-convergence}.
\end{proposition}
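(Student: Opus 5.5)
The plan is to show that Algorithm~\ref{alg:hpi-pg} produces exactly the same iterates as Algorithm~\ref{alg:hpi}. Specifically, we show that the sequence of decision rules $d_1,d_2,\ldots,d_{K+1}$ is the same, hence so are the occupancy measures $x^{\pi_k}$ and the average $\bar{x}_K$. We also show that the returned decision rule $d_K^{\text{HPI}}$ induces the same occupancy measure $\bar{x}_K$. The optimality gap~\eqref{eq:opt-gap} depends on a stationary policy only through its occupancy measure. Given the equivalence, the bound of Theorem~\ref{thm:hpi-convergence} therefore transfers verbatim. Throughout we assume, as the statement implicitly does, that the parameterization $d_\theta$ is expressive enough to contain the maximizers (e.g.\ tabular softmax). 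The optimizations are then over the action simplex at each state.

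For step \#3, fix $k$ and write $p(a|s)=d_\theta(a|s)$. The objective decomposes over states as $\sum_s \sum_a x^{\pi_k}(s,a)\,\frac{p(a|s)}{d_k(a|s)}\big(Q^{\pi_k}_M(s,a)-\frac1\eta\log\frac{p(a|s)}{d_k(a|s)}\big)$. By Lemma~\ref{lem:unichain}(\ref{unichain:decision-rule}), $x^{\pi_k}(s,a)=\rho_k(s)d_k(a|s)$, where $\rho_k(s)=\sum_{a'}x^{\pi_k}(s,a')$. So the per-state term equals $\rho_k(s)\big(\sum_a p(a|s)Q^{\pi_k}_M(s,a)-\frac1\eta \mathrm{KL}(p(\cdot|s)\,\|\,d_k(\cdot|s))\big)$. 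This is strictly concave in $p(\cdot|s)$. The standard Gibbs variational principle (a Lagrange multiplier computation) gives the unique maximizer $p(a|s)\propto d_k(a|s)\exp(\eta Q^{\pi_k}_M(s,a))$, which is exactly the Hedge update of Algorithm~\ref{alg:hpi}.

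The main obstacle is states with $\rho_k(s)=0$, i.e.\ transient states, where the objective places no constraint on $p(\cdot|s)$. I would handle this in one of two ways. The first option is to note that the maximizer can be chosen to coincide with the Hedge update there, by the same tie-breaking convention. The better option is to argue that it does not matter. By Lemma~\ref{lem:unichain}(\ref{unichain:decision-rule}), the occupancy measure of a stationary policy is unchanged by altering the decision rule on states of zero occupancy. However, $Q_M^{\pi_k}$ at transient states still enters the later iterates. So I would have to check that the regret argument in the proof of Theorem~\ref{thm:hpi-convergence} only uses the update at states weighted by $x^{\pi}$ of comparator policies. Those weights can be positive on such states, so the safe route is the first one: adopt the convention that $d_{k+1}(\cdot|s)$ equals the Hedge update wherever $\rho_k(s)=0$. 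With this convention, induction on $k$ shows $d_k$ coincides for both algorithms.

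For step \#4, Lemma~\ref{lem:omm} applied with $x=\bar{x}_K$ applies directly. We need $\bar{x}_K\in X$, which holds because $X$ is convex (Lemma~\ref{lem:unichain}(\ref{unichain:convex-hull})) and $\bar{x}_K$ is a convex combination of the $x^{\pi_k}\in X$. The lemma then gives that any maximizer $d_K^{\text{HPI}}$ has occupancy measure $\bar{x}_K$, which is the same occupancy measure as the return policy of Algorithm~\ref{alg:hpi}. Since the optimality gap is $\max_{\pi'}\sum x^{\pi'}(s,a)\bar{x}_K(s',a')M((s,a),(s',a'))$, it is identical for the two return policies. Theorem~\ref{thm:hpi-convergence} then yields the bound $4\margmax\tau\sqrt{\log(|\mathcal{A}|)/K}$.
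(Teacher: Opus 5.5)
Your handling of step \#4, and of step \#3 at states with $\rho_k(s)>0$, is fine. The way you dispose of states with $\rho_k(s)=0$, however, leaves a genuine gap. Declaring that $\hat d_{k+1}(\cdot\mid s)$ equals the Hedge update there proves the bound only for one particular selection from the argmax. Algorithm~\ref{alg:hpi-pg} permits any maximizer at those states, and the selections that actually arise are generally not yours. Gradient ascent on a tabular softmax never touches the parameters of unvisited states, so there $\hat d_{k+1}(\cdot\mid s)=\hat d_k(\cdot\mid s)$. With function approximation, the values are whatever the network happens to output. As written, you have proved the proposition for a modified algorithm, not for Algorithm~\ref{alg:hpi-pg}.

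The missing idea is that only the occupancy measures need to coincide, not the decision rules. As you note yourself, the optimality gap of the return policy depends only on $\bar x_K$, so there is no need to revisit the regret argument. Let $R$ be the support of $\nu_k(s)=\sum_a x^{\pi_k}(s,a)$ for the Algorithm~\ref{alg:hpi} iterates. Every $d_k$ has full support, so all the chains $P_{\pi_k}$ have the same transition graph. Hence $R$ does not depend on $k$, and $R$ is closed under every action.

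Now induct on the weaker claim that $\hat d_k(\cdot\mid s)=d_k(\cdot\mid s)$ for all $s\in R$.
\begin{itemize}
\item By Lemma~\ref{lem:unichain}(\ref{unichain:decision-rule}), this gives $\hat x_k=x_k$, hence $c_M^{\hat\pi_k}=c_M^{\pi_k}$.
\item A trajectory started from $(s,a)$ with $s\in R$ never leaves $R$, where the two rules agree. So $Q_M^{\hat\pi_k}(s,a)=Q_M^{\pi_k}(s,a)$ for all $s\in R$ and all $a$.
\item Your per-state KL computation then yields $\hat d_{k+1}=d_{k+1}$ on $R$, whatever $\hat d_{k+1}$ does off $R$.
\end{itemize}
Having $\hat x_k=x_k$ for all $k$ gives equal averages and, via Lemma~\ref{lem:omm}, equal optimality gaps. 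This is precisely the paper's induction.

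Your worry that comparator occupancy measures might put weight on states outside $R$ is also unfounded. Since $R$ is closed under every action, the unique recurrent class of any stationary policy lies inside $R$.
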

\begin{proof}
    See Appendix~\ref{ap:proof-pg-form}.
\end{proof}

Thus, the policy-gradient form of Hedged Policy Iteration is amenable to function approximation; its maximization objectives are convex with respect to the action probabilities of the decision rule $d_{\theta}$.

\subsection{A Deep-Learning Implementation}
From the policy-gradient form of HPI (Algorithm~\ref{alg:hpi-pg}), the deep-learning implementation (Algorithm~\ref{alg:hpi-deep}) is obtained with the following changes:

\begin{algorithm}[tb]
\caption{HPI (Deep-Learning Implementation)}\label{alg:hpi-deep}
    \begin{algorithmic}[1]
    \STATE{\textbf{Input}: learning rate $\eta$, buffer $\mathcal{B}_{\text{avg}}$, number of behavior cloning steps $n_{\text{BC}}$,}
    \STATE{Initialize decision rule $\hat{d}_1:\mathcal{S}\to\text{Dist}(\mathcal{A})$ arbitrarily}
    \STATE{Initialize value estimate $\hat{Q}_M^{\pi_1}:\mathcal{S}\times\mathcal{A}\to\mathbb{R}$ arbitrarily}
    \FOR{$k=1,\ldots,K$}
    \STATE {\color{\commentcolor}\# 1. Estimate occupancy measure}
    \STATE{Collect samples $\mathcal{B}=\{(s_t^i,a_t^i)_{t=0}^{T-1}\}_{i=1}^N$ from $\pi_k$}
    \STATE{$\mathcal{B}_{\text{avg}}\gets\text{UpdateBuffer}(\mathcal{B}_{\text{avg}},\mathcal{B})$}
    \STATE {\color{\commentcolor}\# 2. Evaluate (approximately)}
    \STATE{$\hat{c}_{k,t}^i \gets \frac{1}{|\mathcal{B}|}\sum_{s',a'\in\mathcal{B}}\marg((s_t^i,a_t^i), (s',a'))$}
    \STATE{$\hat{Q}_M^{\pi_k}\gets \text{UpdateQ}(\hat{Q}_M^{\pi_{k-1}},\{(s_t^i,a_t^i,\hat{c}_{k,t}^i)_{t=0}^{T-1}\}_{i=1}^N)$}
    \STATE {\color{\commentcolor}\# 3. Update decision rule}
    \STATE{Select $\hat{d}_{k+1}$ by optimizing:}
        \STATE {\small$\max_{\theta}\frac{1}{NT}\sum\limits_{s_t^i,a_t^i}\frac{d_{\theta}(a_t^i|s_t^i)}{d_k(a_t^i|s_t^i)}\left(\hat{Q}_{\marg}^{\pi_k}(s_t^i,a_t^i)
        - \frac{1}{\eta}\log \frac{d_{\theta}(a_t^i|s_t^i)}{d_k(a_t^i|s_t^i)}\right)$}
    \ENDFOR
    \STATE {\color{\commentcolor}\# 4. Estimate average policy}
    \STATE{Compute $\hat{d}_{K}^{\text{HPI}}$ by optimizing, for $n_{\text{BC}}$ gradient steps, }
    \STATE {\small $\max_{\theta} \frac{1}{|\mathcal{B}_{\text{avg}}|}\sum_{s,a\in\mathcal{B}_{\text{avg}}} \log d_{\theta}(a|s)$}
    \STATE{\textbf{Return} $\hat{d}^{\text{HPI}}_{K}$}
\end{algorithmic}
\end{algorithm}

\begin{itemize}
    \item Replace the occupancy measure $x_k$ with a buffer of samples $\mathcal{B}$ collected from $\pi_k$. Replace the average occupancy measure $\bar{x}_k$ with a buffer of samples $\mathcal{B}_{\text{avg}}$.
    \item Replace exact computation of $Q_M^{\pi_k}(s,a)$ with a value function update, UpdateQ. Because the cumulant has the same type as a reward function, any reinforcement-learning method for value estimation can be used.
    \item Compute decision rules $\hat{d}_{k+1}$ and $\hat{d}_K^{\text{HPI}}$ by solving the optimization objectives in steps \#3 and \#4 approximately. By default, when $n_{BC}=0$, the last iterate $\hat{d}_{k+1}$ is returned.
\end{itemize}

\paragraph{HPI-Clip.} As we discuss in Appendix~\ref{ap:hpi-clip}, the approximate update step in step \#3 of Algorithm~\ref{alg:hpi-deep} can be replaced by a clipped surrogate objective, similar to the clipped objective in Proximal Policy Optimization~\citep{schulman2017proximal}. When this replacement is made, we refer to the resulting algorithm as \textit{HPI-Clip}.

\section{Experiments}
\label{sec:experiments}
\begin{figure*}[!tb]
    \centering
    \includegraphics[width=\linewidth]{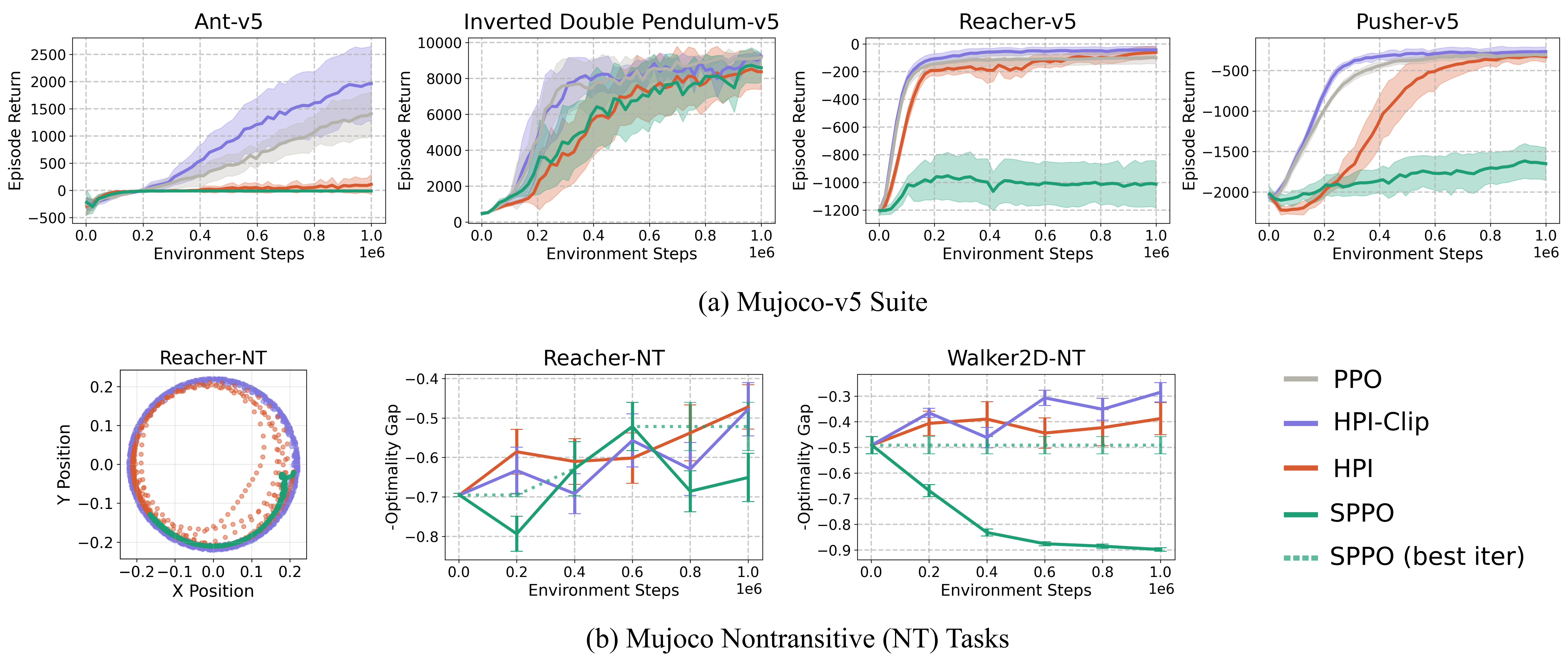}
    \caption{Performance of algorithms on (a) Mujoco-v5 Suite and (b) Mujoco-NT Tasks, which have long time horizons. Both HPI-Clip and HPI are more learning-efficient than SPPO on these tasks, as measured by the Area Under the Curve (AUC).}
    \label{fig:experiment-summary}
\end{figure*}
One of our main reasons for introducing Markov decision contests was our claim that existing methods of reinforcement learning with pairwise preferences were inefficient in long-term decision problems. We now validate this claim, through the following hypothesis:
\begin{center}
    \textit{In long-term decision problems, HPI is more learning-efficient than existing algorithms that learn with pairwise preferences.}
\end{center}

\paragraph{Method.} We compared HPI (Algorithm~\ref{alg:hpi-deep}) and HPI-Clip (Algorithm~\ref{alg:hpi-clip}) against~\citet{swamy2024minimaximalist}'s Self-Play Preference Optimization (SPPO). We did not compare with the algorithms of~\citet{shani2024multi} and~\citet{wu2026multi}, as they both solve a regularized objective. Appendix~\ref{ap:algorithm-details} provides more details about our algorithm implementations.

We considered 13 Markov decision contests (called ``tasks'', for short), which used~\citet{todorov2012mujoco}'s Mujoco environments. The horizon for each task is long: the default horizon is 2048 steps (though some tasks end early, when specific states are reached).

\textit{Mujoco-v5 Suite.} The first 11 tasks were the eleven tasks of the Mujoco-v5 suite~\citep{todorov2012mujoco}. This is a common set of continuous control tasks, which have large state and action spaces. For these tasks, the goal is specified with a reward function. So the preference margin given to HPI, HPI-Clip, and SPPO was given as the difference in task reward (as in~\eqref{eq:reward-express}). We also compared with PPO, which learns from task's reward directly. The optimality gap in these tasks is equal to the episodic return, up to constant factors. Thus, we measured learning efficiency as the Area Under the Curve (AUC) of the episodic return v.s. environment timestep graph.

\textit{Mujoco-NT Tasks.} The last two tasks used the environments from the Mujoco-v5 suite (Reacher and Walker2d), but replaced the task's original reward with a nontransitive preference margin. We call these the \textit{Mujoco-NT Tasks}. The nontransitive preference margins are described in Appendix~\ref{ap:mujoco-nt}. The Reacher-NT preference margin was proposed by~\citet{swamy2024minimaximalist}. The Walker2d-NT preference margin was defined with a best-two-out-of three rule (as in Example~\ref{ex:b2o3}), with performance criteria on three features. In Reacher-NT, the optimal policy must trace a circle with the reacher fingertip. In Walker2d-NT, the optimal policy must visit each of the three features with equal probability.

We estimated the optimality gap of these policies by training a PPO algorithm to minimize the objective in~\eqref{eq:opt-gap}. We also provide trajectory plots to ensure that the estimated optimality gap was consistent with expected behavior. Learning efficiency was measured as the Area Under the Curve of the (inverted) optimality gap v.s. environment timestep graph. Additional evaluation details are provided in Appendix~\ref{ap:mujoco-nt}.

\paragraph{Results.} All reported results show the mean and 95\% confidence intervals across 10 independent instances with different random seeds. Figure~\ref{fig:experiment-summary} shows a summary of our results. The rest of the results are in Appendix~\ref{ap:experiments}. The remaining results on the Mujoco-v5 suite are shown in Figure~\ref{fig:mujoco-suite}. Figures~\ref{fig:reacher-trajectories} and~\ref{fig:walker2d-trajectories} show the remaining results for the Mujoco-NT tasks. \textbf{In 12/13 tasks, HPI's AUC confidence interval either exceeds or overlaps with SPPO's.} The same holds for HPI-Clip's AUC confidence interval in comparison with SPPO's. Thus, we conclude that both HPI and HPI-Clip are significantly more learning-efficient than SPPO on this set of tasks.

\paragraph{Ablation Studies.} In Appendix~\ref{ap:learned-preference-model} we verify that similar gains in learning efficiency hold when algorithms attempt to solve the tasks using preference models that are learned from offline data. In Appendix~\ref{ap:sppo}, we provide a detailed comparison of HPI-Clip and SPPO to study how learning efficiency varies with the task horizon.

\section{Conclusion}
\label{sec:conclusion}
We introduced Markov decision contests to study reinforcement learning with pairwise preferences in long-term decision problems. We proved three main results. First, \textit{stationary} policies are optimal within the set of history-dependent policies (Theorem~\ref{thm:markov}). Second, solving a Markov decision contest exactly is in P, the same complexity class as Markov decision processes, despite the greater generality of pairwise preferences (Theorem~\ref{thm:lp}). Third, a simple iterative algorithm (Hedged Policy Iteration) solves Markov decision contests converges to an optimal policy at a rate of $1 / \sqrt{K}$ (Theorem~\ref{thm:hpi-convergence}). In 13 high-dimensional Markov decision contests with long time horizons, our algorithms (HPI and HPI-Clip) were significantly more learning-efficient than SPPO, the most relevant prior algorithm. These results suggest that reinforcement-learning with pairwise preferences remains a promising method of user-preference alignment, even in long-term decision problems.


\section*{Acknowledgements}
The reviewers provided excellent feedback on this paper. Wesley Chung, Karim Abdel Sadek, Henrik Marklund, David Abel, Mandana Samiei, Shahrad Mohammadzadeh, and Anmol Kagrecha also shared ideas and suggestions that were very helpful.

This research was enabled in part by compute resources provided by Mila (mila.quebec). It was partially funded by a Mitacs Globalink Research Award.

\section*{Impact Statement}
This paper presents work whose goal is to advance the field of Machine
Learning. There are many potential societal consequences of our work, none
which we feel must be specifically highlighted here.

\bibliography{references}
\bibliographystyle{icml2026}

\newpage
\appendix
\onecolumn

\section{Additional Experiments}
\label{ap:experiments}

\subsection{Mujoco-v5 Suite}
\label{ap:mujoco-suite}

Figure~\ref{fig:mujoco-suite} illustrates the performance of our algorithms in the Mujoco-v5 suite~\citep{todorov2012mujoco}. The solid lines represent means over 10 independent training runs and the shaded areas represent 95\% confidence intervals.
\begin{figure*}[htpb]
    \centering
    \includegraphics[width=\linewidth]{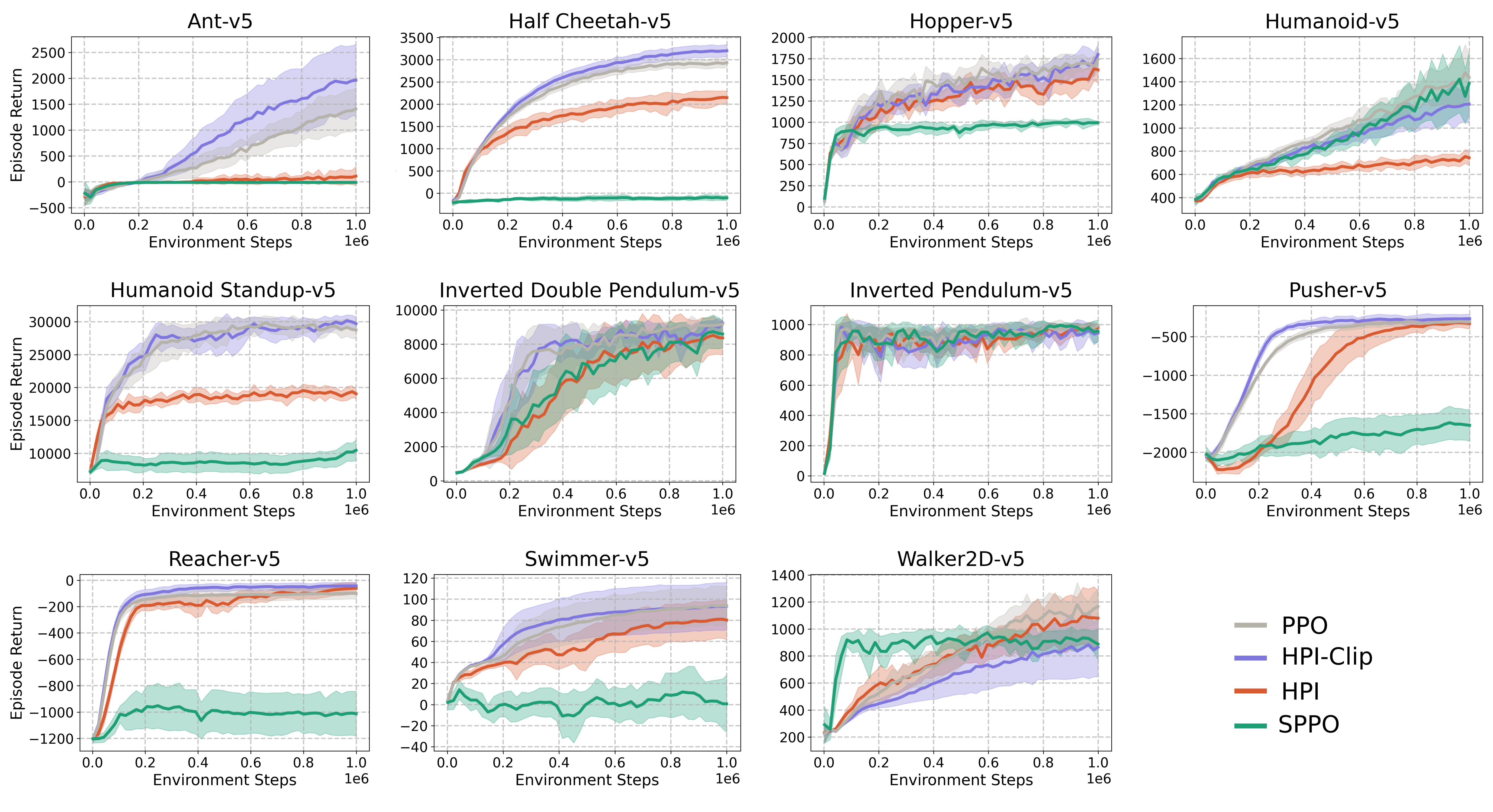}
    \caption{Comparison of algorithms on the Mujoco-v5 suite, training for one million timesteps.}
    \label{fig:mujoco-suite}
\end{figure*}

We measured learning efficiency as the Area Under the Curve (AUC) over all training steps. The confidence interval for HPI-Clip's AUC was strictly higher than the confidence interval for SPPO's AUC in 7 tasks (Ant-v5, Half Cheetah-v5, Hopper-v5, Humanoid Standup-v5, Pusher-v5, Reacher-v5, Swimmer-v5). Meanwhile, the confidence interval for SPPO's AUC was strictly higher than the confidence interval for HPI-Clip's AUC just 1 task (Walker2d-v5). We conclude that HPI-Clip is more learning-efficient than SPPO on the Mujoco-v5 suite, because its AUC is significantly better on 7 of 11 tasks and only significantly worse on 1 of 11 tasks.

The confidence interval for HPI's AUC was strictly higher than the confidence interval for SPPO's AUC in 6 tasks (Half Cheetah-v5, Hopper-v5, Humanoid Standup-v5, Pusher-v5, Reacher-v5, Swimmer-v5). SPPO's AUC confidence interval was strictly higher than HPI-Clip's AUC confidence interval on 1 task (Humanoid-v5). So, we conclude that HPI is more learning efficient than SPPO on the Mujoco-v5 suite, because its AUC is significantly better on 6 of 11 tasks and only significantly worse on 1 of 11 tasks.

\subsection{Mujoco-NT}
\label{ap:mujoco-nt}
The Mujoco-NT tasks were adapted from Reacher-v5 and Walker2d-v5 tasks from the Mujoco suite. These environments were chosen as they were relatively fast to run (which was necessary for our evaluation, as we will discuss).

\paragraph{Task description.} The Mujoco-NT tasks, Reacher-NT and Walker2d-NT, kept all of the same elements of the Reacher-v5 and Walker2d-v5 tasks, except they replaced the task rewards from the Mujoco suite with preference margins.

The Reacher-NT preference margin has two preference criteria: fingertip radius from origin and first arm angular position. The margin favors observations where the fingertip is farther from the origin (30\% weight) and implements a counter-clockwise angular preference for arm 1 (70\% weight).~\citet{swamy2024minimaximalist} considered a nontransitive preference of this form for the Ant environment. We chose to experiment with Reacher instead of Ant, because we found the optimality gap estimates to be more reliable in Reacher. The pseudocode for this preference margin is provided in Appendix~\ref{ap:preference-margins-pseudocode}.

The Walker2d-NT preference margin implements a nontransitive preference based on three competing objectives: Height $>$ Speed $>$ Stability $>$ Height. Each environment observation is classified by its dominant feature (according to its highest normalized value), and preferences follow a rock-paper-scissors pattern where high walkers beat fast walkers, fast walkers beat stable walkers, and stable walkers beat high walkers. The pseudocode for this preference margin is given in Appendix~\ref{ap:preference-margins-pseudocode}.

\paragraph{Evaluation details.} To evaluate performance in Mujoco-NT, we estimated the optimality gap of each algorithm (HPI, HPI-Clip, and SPPO) after every 200 000 timesteps. After each 200 000 timesteps, we would estimate the occupancy measure the algorithm's return policy by sampling 100 trajectories from it. Then, we trained a PPO agent the optimality gap objective from~\eqref{eq:opt-gap}. The PPO agents used the same hyperparameters and network architectures as in the Mujoco-v5 suite. 

Because this was only an approximation of the true optimality gap, we also plotted trajectories sampled from each algorithm throughout training, to determine whether the trends in approximate optimality gap were consistent with policy behavior.

\paragraph{Results.} The estimated optimality gaps are plotted in Figure~\ref{fig:experiment-summary}(b). They are inverted so that the trends are visually similar to the expected return learning curves. The trajectory samples of algorithms run on Reacher-NT and Walker2d-NT are shown in Figures~\ref{fig:reacher-trajectories} and~\ref{fig:walker2d-trajectories}, respectively.


\begin{figure}
    \centering
    \includegraphics[width=0.9\linewidth]{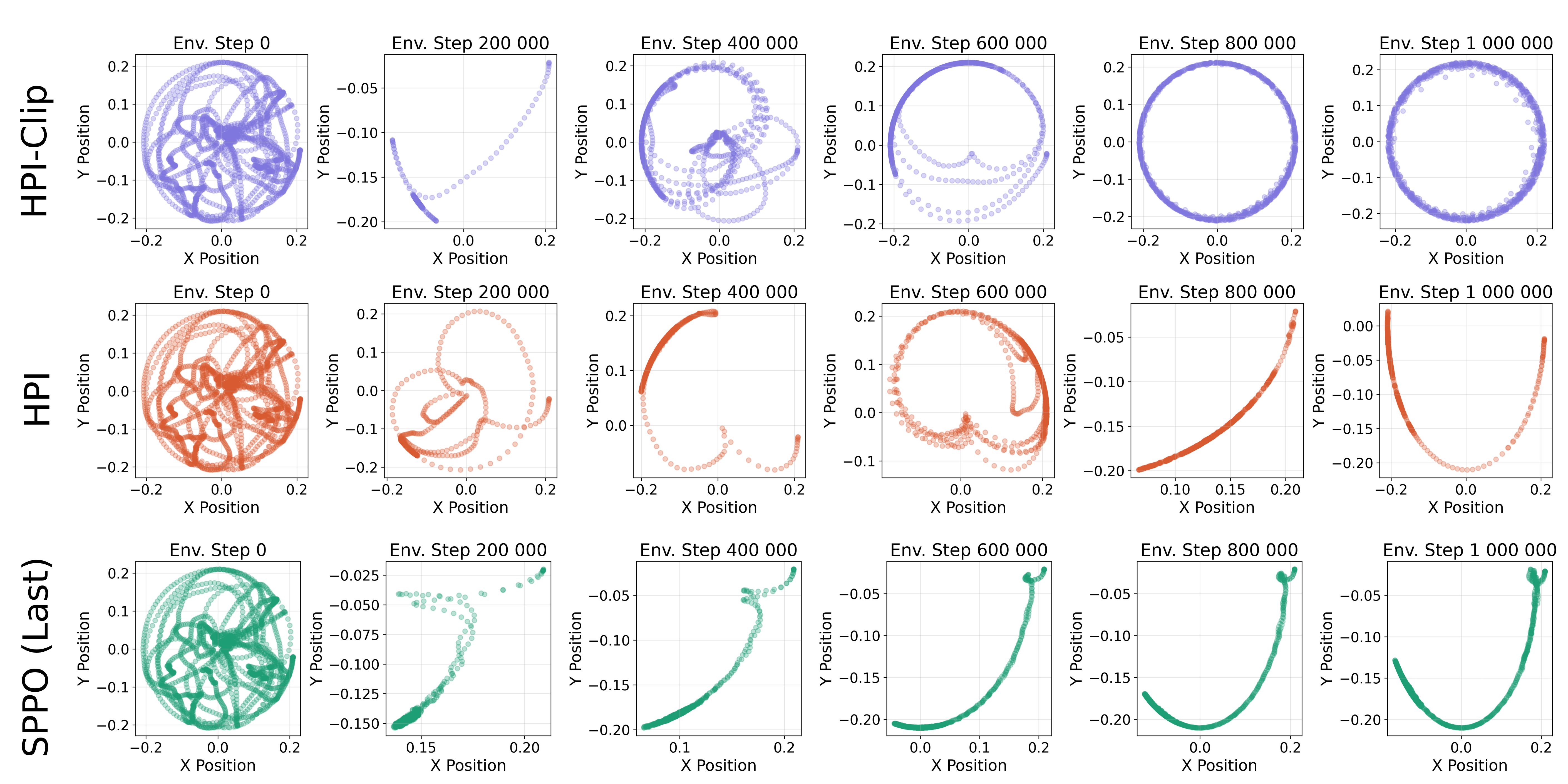}
    \caption{Position of the Reacher's fingertip in the Reacher-NT task during training. In this task, the optimal policy creates a perfect circle with the fingertip.}
    \label{fig:reacher-trajectories}
\end{figure}

\begin{figure}
    \centering
    \includegraphics[width=0.9\linewidth]{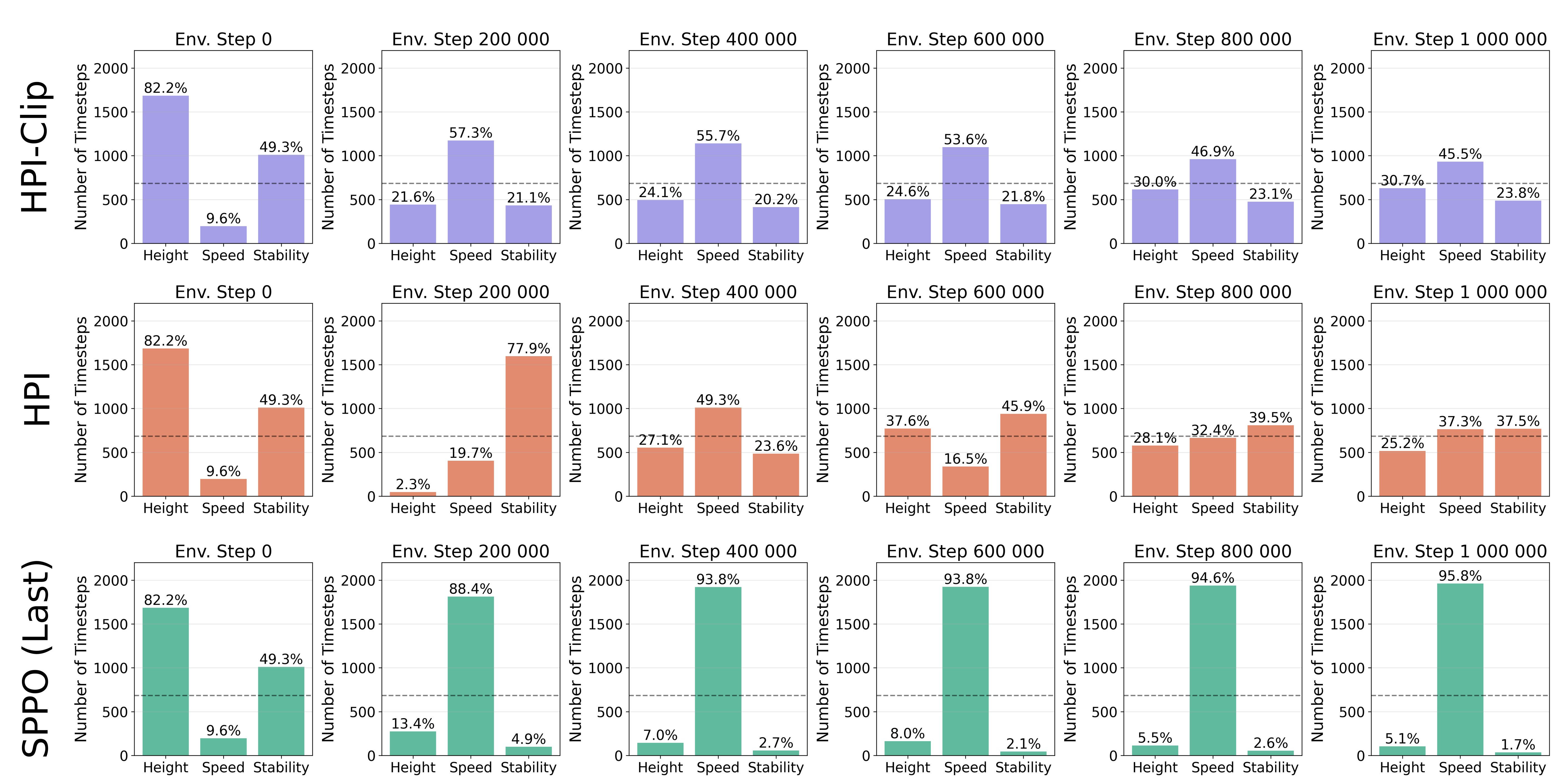}
    \caption{Dominant feature distribution in the Walker2d-NT task during training. In this task, the optimal policy visits each dominant feature 33.3\% of the time.}
    \label{fig:walker2d-trajectories}
\end{figure}

\paragraph{Conclusions.} We measure learning efficiency according to the AUC of the (inverted) optimality-gap learning curves in Figure~\ref{fig:experiment-summary}(b). The AUC confidence intervals of HPI-Clip are strictly higher than the confidence intervals of SPPO on both Reacher-NT and Walker2d-NT. The same holds for the confidence intervals of HPI. Thus, we conclude that both HPI and HPI-Clip are more learning efficient than SPPO on this task.

Importantly, the trends in optimality gap are consistent with the actual behavior of decision policies in these tasks. For example, as the optimality gap shrinks in Reacher-NT, the reacher's fingertip position makes a large circle in Figure~\ref{fig:reacher-trajectories}. As the optimality gap shrinks in Walker2d-NT, the policies spend a more even amount of time across all three dominant features in Figure~\ref{fig:walker2d-trajectories}.  
\subsection{Ablation 1: Experiments with Learned Preference Models}
\label{ap:learned-preference-model}
In many applications, the true preference margin is unknown, and it must be learned from pairwise comparison data. So, we conducted an experiment to verify whether the gains in learning efficiency we observed in Mujoco-v5 suite and Mujoco-NT would persist if the algorithms used a learned preference model.

For this experiment, we ran each algorithm in each task using a learned preference model. For each task and each algorithm, our method was the following:
\begin{enumerate}
    \item Attempt the task using the task's true preference margin (or the task's reward, for PPO). Save 500 trajectories that are evenly-spaced throughout the first 1 million environment steps, as it attempts the task using the task's preference margin (or the task's reward, for PPO).
    \item Randomly sample 100,000 comparisons from the 500 rollouts.
    \item Follow the procedure of~\citet{christiano2017deep} to learn an ensemble of three preference models (or an ensemble of three reward models, for PPO).
    \item Attempt the task using the learned preference-model ensemble (or reward-model ensemble, for PPO).
\end{enumerate}

\subsubsection{Mujoco-v5 Suite with Learned Preference Models}
Figure~\ref{fig:mujoco-suite-offline} shows the performance of each algorithm that used a learned preference model to solve the Mujoco-v5 tasks.
\begin{figure*}[htpb]
    \centering
    \includegraphics[width=\linewidth]{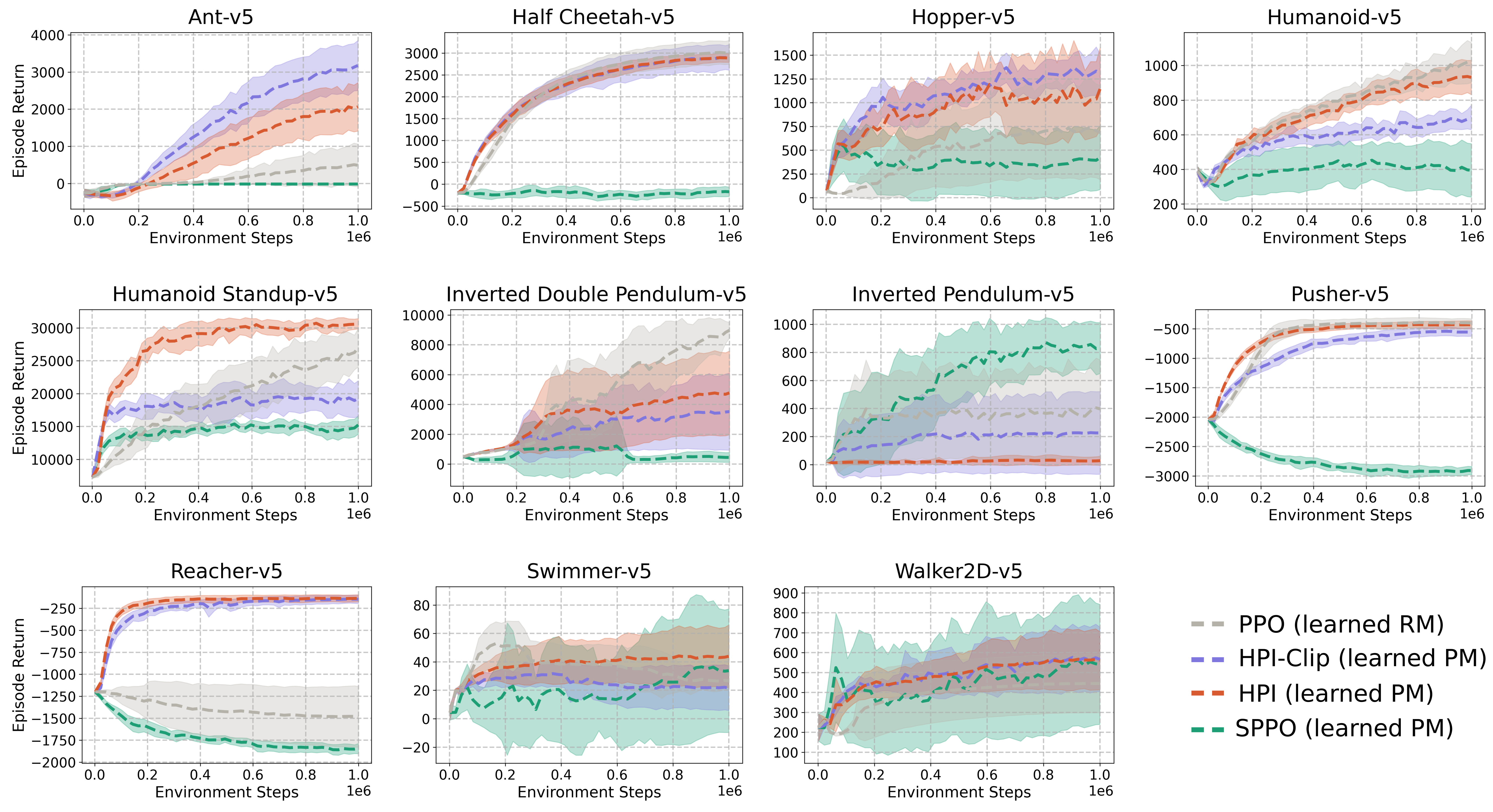}
    \caption{Comparison of algorithms that use learned preference models to attempt tasks on the Mujoco-v5 suite.}
    \label{fig:mujoco-suite-offline}
\end{figure*}

Again, we measured learning efficiency as the Area Under the Curve (AUC) over all training steps. The confidence interval for HPI-Clip's AUC was strictly higher than the confidence interval for SPPO's AUC in 7 tasks (Ant-v5, Half Cheetah-v5, Hopper-v5, Humanoid-v5, Humanoid Standup-v5, Pusher-v5, Reacher-v5). Meanwhile, the confidence interval for SPPO's AUC was strictly higher than the confidence interval for HPI-Clip's AUC just 1 task (Inverted Pendulum-v5). We conclude that HPI-Clip is more learning-efficient than SPPO on the Mujoco-v5 suite, because its AUC is significantly better on 7 of 11 tasks and only significantly worse on 1 of 11 tasks.

The confidence interval for HPI's AUC was strictly higher than the confidence interval for SPPO's AUC in 8 tasks (Ant-v5, Half Cheetah-v5, Hopper-v5, Humanoid-v5, Humanoid Standup-v5, Inverted Double Pendulum-v5, Pusher-v5, Reacher-v5). SPPO's AUC confidence interval was strictly higher than HPI-Clip's AUC confidence interval on 1 task (Inverted Pendulum-v5). We conclude that HPI is more learning efficient than SPPO on the Mujoco-v5 suite, because its AUC is significantly better on 8 of 11 tasks and only statistically worse on 1 of 11 tasks.

\subsubsection{Mujoco-NT with Learned Preference Models}
Figure~\ref{fig:mujoco-nt-offline} shows the performance of each algorithm that used a learned preference model to solve the Mujoco-v5 tasks.
\begin{figure*}[htpb]
    \centering
    \includegraphics[width=\linewidth]{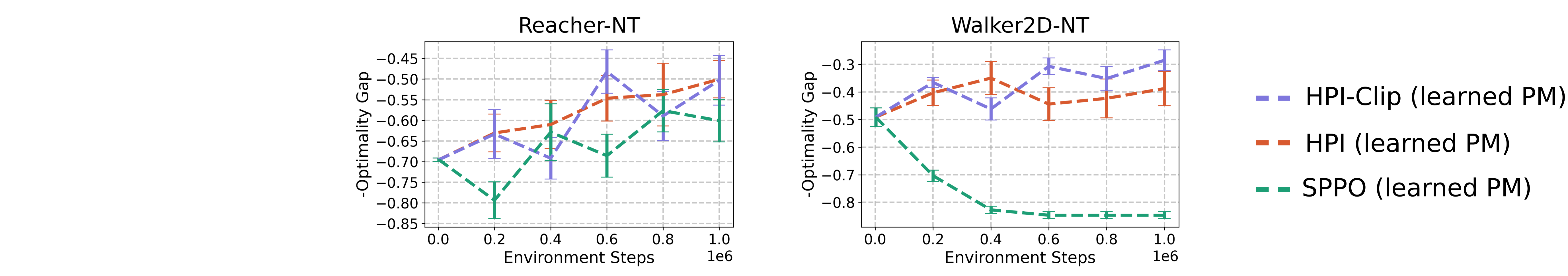}
    \caption{Comparison of algorithms that use learned preference models to attempt Reacher-NT and Walker2d-NT.}
    \label{fig:mujoco-nt-offline}
\end{figure*}

As above, we measured learning efficiency as the Area Under the Curve (AUC). The confidence interval for HPI-Clip's AUC was strictly higher than the confidence interval for SPPO's AUC in both Reacher-NT and Walker2d-NT Tasks. Similarly, the confidence interval for HPI's AUC was strictly higher than the confidence interval for SPPO's AUC on both tasks. Thus, we conclude that both HPI-Clip and HPI are more learning-efficient than SPPO on the Mujoco-NT.
\subsection{Ablation 2: Interpolating between Hedged Policy Iteration and Self-Play Preference Optimization}
\label{ap:sppo}
One important difference between HPI-Clip (Algorithm~\ref{alg:hpi-clip}) and SPPO~\citep[Algorithm 2]{swamy2024minimaximalist}  is that SPPO averages the per-timestep preference cumulant over the entire trajectory. This is because SPPO only receives a preference signal at the end of an episode. As a result of this averaging, SPPO may not evaluate policies correctly. HPI-Clip does not do this averaging. So we expect it to perform better.

We verify this by implementing variants of SPPO, where we vary the number of timesteps over which the per-timestep preference cumulant is averaged. We refer to this number of timesteps as the ``comparison horizon'' $T_C$. SPPO corresponds to the variant where $T_C$ is equal to the task horizon. In theory, HPI-Clip should correspond to the variant where $T_C=1$. In Figure~\ref{fig:sppo-ablation}, we confirm that by varying the comparison horizon, we interpolate between HPI-Clip and SPPO. In most environments, increasing the comparison horizon decreases performance.

\begin{figure*}[htpb]
    \centering
    \includegraphics[width=\linewidth]{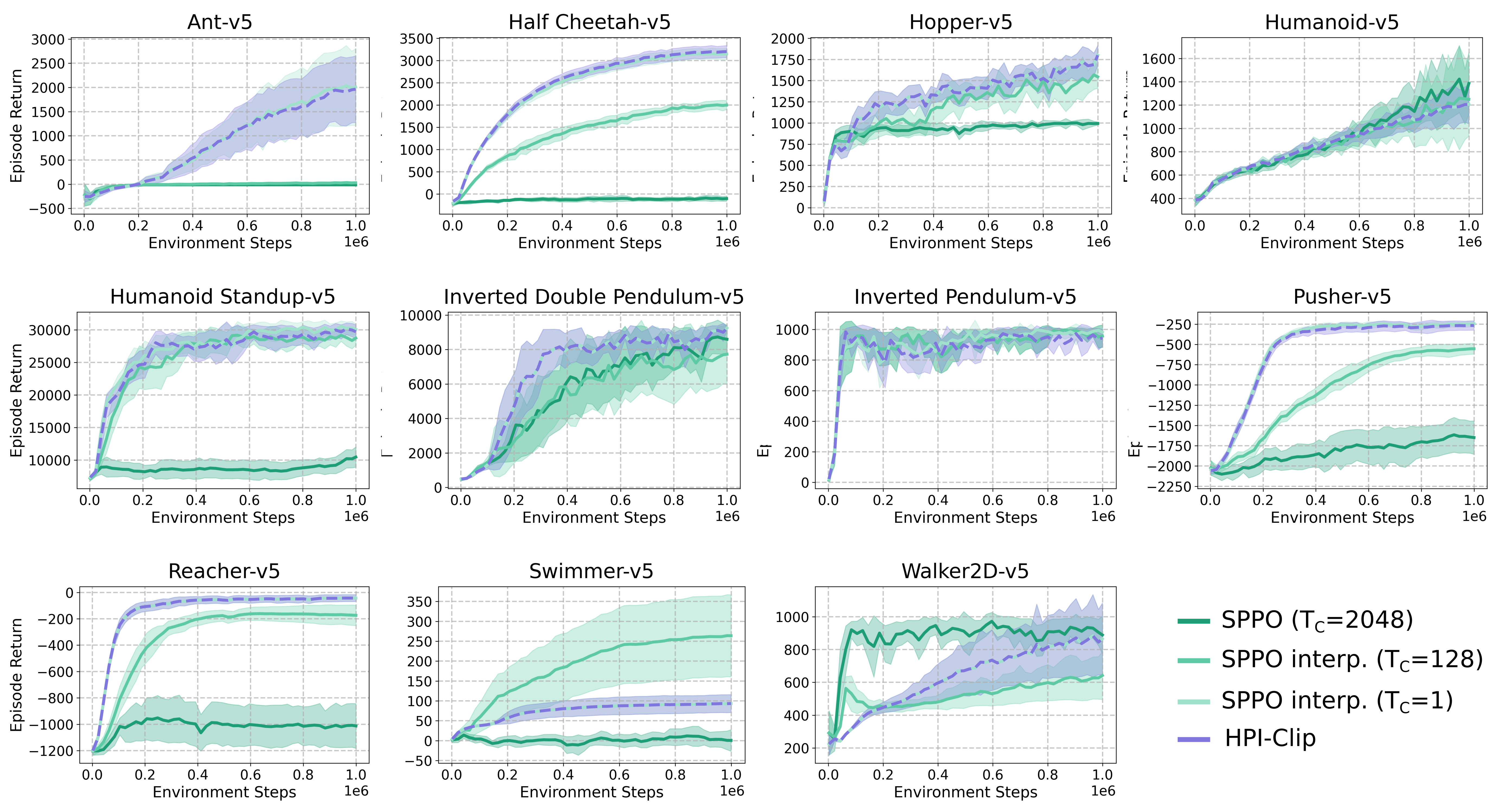}
    \caption{Interpolating between HPI-Clip and SPPO using the ``comparison horizon'' $T_C$, which is the number of timesteps over which per-timestep preference-cumulants are averaged.}
    \label{fig:sppo-ablation}
\end{figure*}



\section{Related Work (Extended)}
\label{ap:related-work}

We elaborate on each of the column properties from Table~\ref{tab:related-work} below.

\paragraph{Reinforcement learning with pairwise preferences.} A problem model of reinforcement learning with pairwise preferences consists of a decision-making environment, a time horizon, and a preference margin. The domain of the preference margin has varied in prior work, and thus the optimality criteria have been different.
\citet{swamy2024minimaximalist,wang2023rlhf,chen2022human}, consider preference margins whose domain is $\mathcal{H}_N$, where $N\in\mathbb{N}$ is a finite time horizon.~\citet{gilbert2015solving,gilbert2016modelfree} and~\citet{shani2024multi} consider preference margins whose domain is $\mathcal{S}$, and where preferences are given between final states (those which occur at timestep $N$).~\citet{wu2026multi} considers a preference margin defined between history-action pairs, which are given at each timestep $t$.

\paragraph{Horizon.} So far, problem models for reinforcement learning with pairwise preferences have considered settings where the problem ends at a known, finite time horizon. This may not be a good description of some important applications of reinforcement learning with pairwise preferences. People interacting with language models, for instance, typically do not end conversations after a fixed known timestep and reset to some initial distribution. Conversations may extend arbitrarily long, and beginnings and end steps may be unknown to the language model. Infinite-horizon models of sequential decision making handle these situations.

Infinite-horizon models of sequential decision making are also useful because they unify formalisms of tasks with finite and infinite horizons. As discussed by~\citet[Chapter 3]{sutton2018rl}, finite horizon Markov decision processes can be represented as infinite-horizon problems. Thus, our infinite-horizon model, the Markov decision contest, can model applications of reinforcement learning with pairwise preferences with either finite or infinite horizons.

\paragraph{Markov policies vs. history-dependent policies.} In Markov decision processes, restricting attention to Markov policies is justified because Markov policies perform optimally among the set of all history-dependent policies. This is useful for applications, because it means that agents need only implement Markov policies in order to guarantee that they can perform just as well as any agent with a history-dependent policy.

Existing work on reinforcement learning with pairwise preferences have not considered guarantees on the performance of Markov policies relative to history-dependent ones.~\citet{chen2022human,wang2023rlhf,swamy2024minimaximalist,wu2026multi} consider history-dependent policies, but do not provide performance guarantees for Markov policies. On the other hand,~\citet{gilbert2015solving,gilbert2016modelfree,shani2024multi} consider only Markov policies, and do not discuss their performance relative to history-dependent ones. Here, we define our performance criteria with respect to history-dependent policies, but then show that Markov policies can be optimal. This extends the well-known result that Markov policies are optimal in infinite-horizon Markov decision processes under the average-reward criterion.

\paragraph{Exact Solution Methods.} The problem of solving a finite unichain Markov decision process exactly is solvable in time polynomial in the number of states and actions~\citep{puterman2014markov}. Thus, it is said to be in P~\citep{littman2013complexity,papadimitriou1987complexity}.

Results on the complexity of solving reinforcement-learning problems that have pairwise preferences have not been provided yet. Here, we show that the problem of solving a Markov decision contest exactly is in P. This is not obvious. The generality of pairwise preferences with respect to reward function is known through logical axioms~\citep{fishburn1982nontransitive}. It is not obvious from these axioms how much computational cost this generality brings.

\paragraph{Approximate Solution Methods.} All work from Table~\ref{tab:related-work} provide approximate solution methods with convergence guarantees. The convergence rates are not directly comparable to one another, as the problem models differ.

\citet{shani2024multi} and~\citet{wu2026multi} validate their approximate solution methods in experiments with function approximation.~\citet{swamy2024minimaximalist} experiment with function approximation using alternate version of their algorithm, but it requires approximations of history-dependent policies with Markov policies, which are not justified. The alternate version of~\citet{swamy2024minimaximalist}'s algorithm is discussed in our experiments. We did not compare with~\citet{shani2024multi} and~\citet{wu2026multi}, because were developed for a regularized version of the problem.

\section{Proofs}
\label{ap:proofs}
\subsection{Proof of Lemma~\ref{lem:unichain}}
\label{ap:proof-unichain}

\textbf{Lemma~\ref{lem:unichain}.}
\textit{When the transition probability function is unichain:
\begin{enumerate}
    \item $\Pi^{SR} \subset \Pi^1(\mu)$.
    \item $X^{1}(\mu)=X^{SR}(\mu)=X$.
    \item $X$ is equal to the convex hull of $X^{SD}(\mu)$.
        In particular, $X$ is closed and convex.
    \item For every state-action distribution $x\in X$, the stationary policy
        $\pi\in\Pi^{SR}$ given by the decision rule $d^{\pi}$, where
    \[
    d^{\pi}(a|s)=\begin{cases}
        \frac{x(s,a)}{\sum_{a'}x(s,a')}&\text{ if }\sum_{a'}x(s,a')>0\\
        \text{arbitrary}&\text{otherwise,}
    \end{cases}
    \]
    satisfies $x^{\pi,\mu}=x$.
\end{enumerate}}

\begin{proof}
Part 1 follows from Proposition 8.9.1(a) of~\citet{puterman2014markov}. Parts (2) and (3) follow from slightly more general results that~\citet{puterman2014markov} provides about subsequential limit points of average state-action frequencies. Puterman states these results in Theorem 8.9.4 and Corollary 8.9.5. We chose not to discuss subsequential limits here, to keep notation light. Part 4 follows from Corollary 8.8.7(a) of~\citet{puterman2014markov}. 
\end{proof}



\subsection{Proof of Theorem~\ref{thm:markov}}
\label{ap:proof-existence}

\textbf{Theorem~\ref{thm:markov}.} \textit{
   Within all finite unichain Markov decision contests:
   \begin{enumerate}
       \item There exists a randomized stationary policy that is optimal under the average-preference-margin criterion.
       \item A randomized stationary policy is optimal under the average-preference-margin criterion if, and only if, it is a solution to
   \end{enumerate}
         \begin{equation}
\max_{\pi\in\Pisr}\min_{\pi'\in\Pisr}\sum_{s,a}\sum_{s',a'}x^{\pi}(s,a)x^{\pi'}(s',a')M((s,a),(s',a'))
       .\end{equation}
}

To prove Theorem~\ref{thm:markov}, we will use the following well-known results about two-player, zero-sum games played over the convex hull of a finite amount of points.
\begin{lemma}
    \label{lem:minimax}
    For any closed, convex subset $C$ of $\text{Dist}(\mathcal{S}\times
\mathcal{A})$,
\begin{enumerate}
    \item The set of solutions to 
\begin{equation}
    \label{eq:lem-nash}
    \max_{x\in C}\min_{x'\in C} \sum_{s,a} \sum_{s',a'}x(s,a)x'(s',a')M((s,a),(s',a'))
,\end{equation} 
is non-empty.
\item An element $x^{*}\in C$ is a solution to~\eqref{eq:lem-nash} if and only if it satisfies
\begin{equation}
    \label{eq:lem-nash-ii}
    \min_{x'\in C} \sum_{s,a}\sum_{s',a'}x^{*}(s,a)x'(s',a')M((s,a),(s',a'))\ge 0
.\end{equation} 
\end{enumerate}
\end{lemma}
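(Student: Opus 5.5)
I would treat \eqref{eq:lem-nash} as a symmetric two-player zero-sum game whose payoff is the bilinear form $f(x,x') = \sum_{s,a}\sum_{s',a'}x(s,a)x'(s',a')M((s,a),(s',a'))$ on the set $C\times C$. The argument has two ingredients: the minimax theorem, and the skew-symmetry of $M$.

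\textbf{Part 1: a solution exists.}
First I would note that $C$ is a closed subset of the compact simplex $\text{Dist}(\mathcal{S}\times\mathcal{A})$, so $C$ is compact and convex. The function $f$ is bilinear, hence continuous, concave in $x$ and convex in $x'$. Von Neumann's (or Sion's) minimax theorem therefore gives
\[
\max_{x\in C}\min_{x'\in C} f(x,x') = \min_{x'\in C}\max_{x\in C} f(x,x') =: v.
\]
The maximum on the left is attained, because $g(x)=\min_{x'\in C}f(x,x')$ is a minimum of continuous functions. It is therefore upper semicontinuous (in fact continuous, since $f$ is jointly continuous on a compact set), and so it attains its maximum on the compact set $C$. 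This shows the solution set of \eqref{eq:lem-nash} is non-empty.

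\textbf{The value is zero.}
Skew-symmetry $M((s,a),(s',a'))=-M((s',a'),(s,a))$ gives $f(x,x')=-f(x',x)$, and in particular $f(x,x)=0$ for every $x\in C$.
\begin{itemize}
\item For any $x$, $\min_{x'}f(x,x')\le f(x,x)=0$, so $v\le 0$.
\item Symmetrically, $\max_{x}f(x,x')\ge f(x',x')=0$ for every $x'$, so $\min_{x'}\max_x f(x,x')\ge 0$.
\end{itemize}
By the minimax equality, $v=0$.

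\textbf{Part 2: the characterisation.}
An element $x^*$ solves \eqref{eq:lem-nash} exactly when $\min_{x'}f(x^*,x')=v$. We already know $\min_{x'}f(x^*,x')\le 0=v$ for every $x^*$. Hence $\min_{x'}f(x^*,x')=v$ holds if and only if $\min_{x'}f(x^*,x')\ge 0$, which is \eqref{eq:lem-nash-ii}.

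\textbf{Main obstacle.}
The only nontrivial step is justifying the minimax equality, so I would state the hypotheses of the minimax theorem explicitly: $C$ is compact and convex, and $f$ is bilinear. Everything else follows from $f(x,x)=0$.
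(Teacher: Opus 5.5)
Your proof is correct and is essentially the argument the paper relies on. The paper gives no proof of this lemma; it only cites it as a well-known fact about zero-sum games over convex hulls of finitely many points, and your combination of a minimax theorem with $f(x,x)=0$ from skew-symmetry is exactly that fact. Using the compact-convex (Sion/von Neumann) version of the minimax theorem, as you do, actually matches the lemma as literally stated for an arbitrary closed convex $C$ (tacitly nonempty), whereas the paper's finite-convex-hull framing only covers polytopes such as $X$.
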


\begin{proof} \textit{(of Theorem~\ref{thm:markov})}
    First, we will show part (2) by showing that the average-preference-margin
    optimality condition can be expressed in the form~\eqref{eq:lem-nash-ii}. Then, we will use Lemma~\ref{lem:minimax} to prove part (1) of the theorem.
    \begin{align*}
        &\pi^{*}\in \Pi^{SR} \text{ is optimal}\\
        \stackrel{(a)}{\iff}& \inf_{ \pi'\in \Pi^1} \sum_{s,a}\sum_{s',a'}x^{\pi^{*}}(s,a)x^{\pi'}(s',a')M((s,a),(s',a'))\ge 0\\
        \stackrel{(b)}{\iff}& \inf_{x'\in X} \sum_{s,a}\sum_{s',a'}
        x^{\pi^{*}}(s,a)x'(s',a')M((s,a),(s',a'))\ge 0\\
        \stackrel{(c)}{\iff} & \min_{x'\in X} \sum_{s,a}\sum_{s',a'}
        x^{\pi^{*}}(s,a)x'(s',a')M((s,a),(s',a'))\ge 0\\
        \stackrel{(d)}{\iff} & x^{\pi^{\star}} \text{ solves }\max_{x\in X}\min_{x'\in X} \sum_{s,a}\sum_{s',a'}
        x(s,a)x'(s',a')M((s,a),(s',a'))\\
        \stackrel{(e)}{\iff} & \pi^{*} \text{ solves }\max_{\pi\in \Pi^{SR}}\min_{\pi'\in \Pi^{SR}} \sum_{s,a}\sum_{s',a'}
        x(s,a)x^{\pi'}(s',a')M((s,a),(s',a'))
    .\end{align*}
    (a) follows from the definition of the average-preference-margin criterion. (b) follows from
    Lemma~\ref{lem:unichain}, part (\ref{unichain:equal-sets}).
    (c) follows from the fact that $X$ is closed and convex
    (Lemma~\ref{lem:unichain}, part (\ref{unichain:convex-hull})), and thus the
function $x'\mapsto \sum_{s,a} \sum_{s',a'} x^{\pi^{\star}}(s,a)x'(s',a')
M((s,a),(s',a'))$, which is continuous, achieves its infimum in $X$. (d) is an
immediate consequence of Lemma~\ref{lem:minimax} applied to $X$. (e) Follows from the fact
that $X=\{x^{\pi}:\; \pi\in \Pi^{SR}\}$, by parts~(\ref{unichain:stationary})
and~(\ref{unichain:equal-sets}) of Lemma~\ref{lem:unichain}.

Part (1) of the theorem follows immediately from the first part of Lemma~\ref{lem:minimax} and relations (d) and (e).

\end{proof}

\subsection{Proof of Theorem~\ref{thm:lp}}
\label{ap:proof-lp}
Recall the game from~\eqref{eq:markov-game}:
\begin{equation}
\label{eq:markov-game-ap}
\max_{\pi\in
\Pi^{SR}}\min_{\pi'\in\Pi^{SR}}\sum_{s,a}\sum_{s',a'}x^{\pi}(s,a)x^{\pi'}(s',a')M((s,a),(s',a'))
.\end{equation}

\textbf{Theorem~\ref{thm:lp}.} \textit{For every finite unichain Markov decision contest, there exists a linear program that solves the game in~\eqref{eq:markov-game} using $|\mathcal{S}| |\mathcal{A}| + |\mathcal{S}| + 1$ variables and $2 |\mathcal{S}| |\mathcal{A}| + |\mathcal{S}|+1$ constraints.
}
\begin{proof}
For clarity, we've numbered the steps in this proof.    
\begin{enumerate}[label=(\alph*)]
    \item We can represent a policy $\pi\in \Pi^{SR}$ through its occupancy measure $x^{\pi}$. The decision
        rule \[
            d^{\pi}(a  \vert s)= \begin{cases}
                x^{\pi}(s,a) / \sum\limits_{a'}x^{\pi}(s,a')&\text{if
                }\sum\limits_{a'}x^{\pi}(s,a') > 0\\
                \text{arbitrary}&\text{otherwise}
            \end{cases}
        \] 
    determines $\pi$.
    \item By Lemma~\ref{lem:unichain}, $X= \{x^{\pi}:\; \pi \in \Pi^{SR}\}$. So, any solution to 
        \begin{equation}
            \label{eq:x-game}
\max_{x\in X}\min_{x'\in X}\sum_{s,a}\sum_{s',a'}x(s,a)x(s',a')M((s,a),(s',a')
        ,\end{equation} 
        is an occupancy measure of an optimal stationary randomized policy. It remains to show that the game in~\eqref{eq:x-game} can be solved by a linear
        program with $|\mathcal{S}| |\mathcal{A}| + |\mathcal{S}| + 1$ variables and $2 |\mathcal{S}| |\mathcal{A}| + |\mathcal{S}|+1$ constraints.
    \item \textbf{Vector notation.} We will identify state-action distributions
as vectors in $x\in \mathbb{R}^{|\mathcal{S}| |\mathcal{A}|}$ whose entries are all
nonnegative and sum to one (that is, $x\ge 0$ and ${\bf 1}^T x=1$). We will identify the
preference margin $M$ as a matrix in $\mathbb{R}^{|\mathcal{S}| |\mathcal{A}|\times
|\mathcal{S}| |\mathcal{A}|}$. So, for all state-action distributions $x$ and $x'$,
\[
    \sum_{s,a}\sum_{s',a'}x(s,a)x'(s',a')M((s,a),(s',a'))=x^T M x'
.\]
So, in vector notation, the objective in~\eqref{eq:x-game} is
\[
\max_{x\in X} \min_{x'\in X} x^T M x'
.\]
\item The $|\mathcal{S}|$ equations from~\eqref{eq:flow} can be written as the system of linear equations
\[
F_P x=0
,\]
where $F_P$ is the matrix $\mathbb{R}^{|\mathcal{S}|\times|\mathcal{S}| |\mathcal{A}|}$ with $[F_P]_{s',s,a}$ equal to $P(s'|s,a)-1$ if $s'=s$ and $P(s|s,a)$ otherwise. Now, $X$ can be written as the set of all vectors $x\in
    \mathbb{R}^{|\mathcal{S}| |\mathcal{A}|}$ that satisfy the following conditions: 
    \begin{align*}
        &x\ge 0\\
        &{\bf 1}^T x=1\\
        &F_Px=0
    .\end{align*}
\item So, for every fixed $\tilde{x}\in X$, the following linear program (LP) solves the inner minimization problem, $\min_{x'\in
    X}\tilde{x}^T M x'$:
    \begin{align*}
        \text{minimize}\quad& \tilde{x}^T M x'\\
    \text{subject to}\quad &x'\ge 0\\
    &{\bf 1}^T x' = 1\\
    &F_{P}x'=0
.\end{align*}
\item The dual of this LP is
\begin{align*}
    \text{maximize}\quad& \kappa\\
    \text{subject to}\quad &M^T \tilde{x}+ h^T F_{P}\ge \kappa {\bf 1}
.\end{align*}
This dual program introduces two new variables: $\kappa\in \mathbb{R}$ and $h\in
\mathbb{R}^{|\mathcal{S}|}$.
\item By the duality theory of linear programs, the maximum value of the 
    dual LP is equal to $\min_{x'\in X} \tilde{x}^T M x'$.
\item So, the solution to $\max_{x\in X}\min_{x'\in X}x^T M
    x'$, maximizes the value of $\kappa$ for all possible $\tilde{x}\in X$. The following LP achieves this:
    \begin{align*}
    \text{maximize}\quad& \kappa\\
    \text{subject to}\quad &M^T x+ h^T F_{P}\ge \kappa {\bf 1}\\
        &x\ge 0\\
        &{\bf 1}^T x=1\\
        &F_Px=0
    .\end{align*}
    This program has $|\mathcal{S}| |\mathcal{A}| + |\mathcal{S}| + 1$ variables
    ($|\mathcal{S}| |\mathcal{A}|$ from $x$, $|\mathcal{S}|$ from $h$, $1$ from
    $\kappa$) and $2 |\mathcal{S}| |\mathcal{A}| + |\mathcal{S}| + 1$ constraints.
\end{enumerate}
\end{proof}

The linear program for solving an average-reward Markov decision process with reward function $r:\mathcal{S}\times \mathcal{A}\to \mathbb{R}$ is:
\begin{align*}
        \text{maximize}\quad& \sum_{s,a}x(s,a) r(s,a)\\
    \text{subject to}\quad &F_{P}x=0\\
    &{\bm 1}^T x = 1\\
    &x\ge 0
.\end{align*}
In comparison to this linear program, the one we've used to solve Markov decision contests has an additional $|\mathcal{S}||\mathcal{A}|$ constraints (from its last inequality) and $|\mathcal{S}|+1$ additional variables ($|\mathcal{S}|$ from $h$, and $1$ from $\kappa$).

\subsection{Proof of Lemma~\ref{lem:marg-basic}}
\label{ap:proof-marg-basic}

To prove Lemma~\ref{lem:marg-basic}, we need some additional definitions and results from~\citet[Appendix A]{puterman2014markov}.
For each stationary randomized policy $\pi\in \Pi^{SR}$, we define
matrices $P_{\pi}, P_{\pi}^{*}$, and $H_{P_{\pi}}$ in $\mathbb{R}^{|\mathcal{S}| \times
|\mathcal{S}|}$ through the following equations:
 \begin{align*}
     [P_{\pi}]_{s',s} &= \sum_{a}d^{\pi}(a \vert s)P(s' \vert s,a)\\
     P_{\pi}^{*} &= \lim_{T \to \infty} \frac{1}{T}\sum_{t=0}^{T-1} P_{\pi}^t\\
     H_{P_{\pi}}&= (I - P_{\pi}+ P_{\pi}^{*})^{-1}(I-P_{\pi}^{*})
.\end{align*} 
\citeauthor{puterman2014markov} calls $H_{P_{\pi}}$ the \textit{deviation matrix}, in light of part 1 of
the following lemma. 
\begin{lemma}
    \label{lem:deviation}
When the transition probability function is unichain and aperiodic, then for all
Markov decision rules $d^{\pi}$ and all states $s$ and $s'$
\begin{enumerate}
    \item $H_{P_{\pi}} = \sum_{t=0}^{\infty} (P_{\pi}^{t}- P_{\pi}^{*})$. In
        particular, the series
        $\sum_{t=0}^{\infty} ([P_{\pi}^{t}]_{s',s}- [P_{\pi}^{*}]_{s',s})$ converges.\label{dev:one}
    \item $[P_{d}^{*}]_{s',s}=\nu^{\pi}(s')$, where
        $\nu^{\pi}(s)=\sum_{a}d^{\pi}(a \vert s)x^{\pi}(s,a)$.\label{dev:two}
    \item There exists a constant $\tau\ge 1$ such that $\sup_{\pi'\in
        \Pi^{SR}}\|H_{P_{\pi'}}\|_1\le \tau$.
\end{enumerate}
\end{lemma}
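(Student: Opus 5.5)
The plan is to derive all three parts from standard finite Markov chain theory, working with the column-stochastic convention $[P_\pi]_{s',s}$ used in the paper.

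\textbf{Part 2.} Under the unichain assumption, $P_\pi$ has a single recurrent class, so it has a unique stationary distribution $\nu^\pi$. The Cesàro limit $P_\pi^* = \lim_T \frac1T\sum_{t<T}P_\pi^t$ exists for any finite stochastic matrix. It satisfies $P_\pi P_\pi^* = P_\pi^* P_\pi = P_\pi^*$, so each of its columns is a stationary distribution and must therefore equal $\nu^\pi$; this gives $[P_\pi^*]_{s',s}=\nu^\pi(s')$. To connect $\nu^\pi$ to $x^\pi$, apply Lemma~\ref{lem:unichain}: the state marginal of $x^\pi$ is the Cesàro average of the state distributions, namely $P_\pi^*\mu=\nu^\pi$. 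Hence $x^\pi(s,a)=\nu^\pi(s)d^\pi(a|s)$. The intended identity is $\nu^\pi(s)=\sum_a x^\pi(s,a)$; the displayed formula with the extra factor $d^\pi$ appears to be a typo, and I would prove the marginal form.

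\textbf{Part 1.} Under aperiodicity as well, $P_\pi^t \to P_\pi^*$ geometrically. Combined with $P_\pi^t - P_\pi^* = (P_\pi - P_\pi^*)^t$ for $t\ge1$ (which follows from the identities above), the spectral radius of $P_\pi-P_\pi^*$ is below $1$. The Neumann series $\sum_{t\ge0}(P_\pi-P_\pi^*)^t=(I-P_\pi+P_\pi^*)^{-1}$ therefore converges. Subtracting $P_\pi^*$ from the $t=0$ term gives $\sum_t (P_\pi^t-P_\pi^*) = (I-P_\pi+P_\pi^*)^{-1} - P_\pi^*$. Finally, $(I-P_\pi+P_\pi^*)^{-1}P_\pi^* = P_\pi^*$, because $(I-P_\pi+P_\pi^*)P_\pi^*=P_\pi^*$. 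This yields exactly $H_{P_\pi}=(I-P_\pi+P_\pi^*)^{-1}(I-P_\pi^*)$.

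\textbf{Part 3.} This uniform bound over the infinite set $\Pi^{SR}$ is the main obstacle. I would argue by continuity and compactness. The set of Markov decision rules is compact, and $d\mapsto P_\pi$ is continuous (linear). The map $d\mapsto P_\pi^*$ is continuous because $\nu^\pi$ is the unique solution of the nonsingular linear system $(I-P_\pi)\nu=0$, $\mathbf 1^T\nu=1$; by unichainness, the kernel of $I-P_\pi$ is one-dimensional for every randomized rule, since the recurrent classes of a randomized rule relate to those of deterministic rules by Puterman's results. Next, $I-P_\pi+P_\pi^*$ is invertible for every $\pi$ by Part 1, so its inverse is continuous. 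Thus $d\mapsto\|H_{P_\pi}\|_1$ is continuous on a compact set, attains a finite maximum, and $\tau$ can be taken as $\max(1,\cdot)$. The delicate point is the continuity of $P_\pi^*$, which depends on the single-recurrent-class property holding uniformly across randomized rules. I would justify it by noting that a randomized stationary policy with two disjoint closed classes would yield a deterministic policy, choosing actions supported in each class, with two recurrent classes, contradicting the unichain assumption.

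Aperiodicity for randomized rules follows similarly: any cycle structure under a randomized rule is inherited from the deterministic rules selecting actions within the support. If this step is hard, a fallback is to bound $\|P_\pi^t-P_\pi^*\|_1\le C\rho^t$ uniformly via a Doeblin-type minorization obtained from finitely many deterministic policies.
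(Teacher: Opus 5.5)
Your proposal is correct and follows essentially the paper's route. Where the paper simply cites Puterman for Parts 1 and 2, you derive the same standard Markov-chain facts directly. Part 3 is the paper's continuity-on-a-compact-set argument, and your extra step, showing that unichain and aperiodicity pass from deterministic to randomized rules, supplies the justification for continuity of $d\mapsto P_{\pi}^{*}$ (and for Part 1 at randomized rules) that the paper only asserts. You are also right that the displayed formula for $\nu^{\pi}$ is a typo: the proof of Lemma~\ref{lem:marg-basic} uses $\nu^{\pi}(s)d^{\pi}(a\vert s)=x^{\pi}(s,a)$, which requires $\nu^{\pi}(s)=\sum_{a}x^{\pi}(s,a)$.
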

\begin{proof} \textit{(of Lemma~\ref{lem:deviation})}
    Part (1) is given as Theorem A.7(c) from Puterman's book. Puterman
    shows part (2) in Appendix A.4 of his book. It remains to show part (3). We've labeled the steps of the proof of part (3), for clarity.
    \begin{enumerate}[label=(\alph*)]
        \item The set of all Markov decision rules can be represented as the
            $|\mathcal{S}|$-fold cartesian product of $\text{Dist}(\mathcal{A})$,
            \[
                (\text{Dist}(\mathcal{A}))^{|\mathcal{S}|} =
                \text{Dist}(\mathcal{A}) \times \cdots \times \text{Dist}(\mathcal{A})
            .\] 
            Thus, the set of all Markov decision rules can be represented as a
            compact subset of $\mathbb{R}^{|\mathcal{S}| |\mathcal{A}|}$. We denote this subset by $\mathcal{D}^{MR}$.
        \item The function $f: (\mathcal{D}^{MR}, \|\cdot\|_1)\to (\mathbb{R},
            | \cdot |)$, given by $f(d)= \|H_{P_{d}}\|_1$, is continuous.
            \begin{itemize}
                \item This is because, when the transition probability function is
                    unichain and aperiodic, the maps $d^{\pi}\mapsto P_{\pi}$ and $d^{\pi}\mapsto
                    P_{\pi}^{*}$ are continuous, and the matrix $(I- P_{\pi}+
                    P_{\pi}^{*})$ is nonsingular. And so, $f$ continuous, as it is a composition of continuous functions.
            \end{itemize}
        \item By (a) and (b), the function $f$ is continuous and defined over compact metric
            space. So, there exists a constant $\tau\ge 1$ such that \[
                \sup_{\pi\in \Pi^{SR}}\|H_{P_{\pi}}\|_1 = \sup_{d\in \mathcal{D}^{MR}}f(d) \le \tau
            .\] 
    \end{enumerate}
\end{proof}

\textbf{Lemma~\ref{lem:marg-basic}} (Properties of marginal value functions). \textit{ If the Markov decision contest is aperiodic (in addition to being unichain), then there exists a constant $\tau\ge 1$ such that, for every stationary policy $\pi\in\Pisr$, state $s\in\mathcal{S}$, and action $a\in\mathcal{A}$,
\begin{enumerate}
    \item $|V_{\marg}^{\pi}(s)|\le M_{\text{max}} \tau$ and $|Q_{\marg}^{\pi}(s,a)|\le 2\,\margmax\,\tau $.
    \item $Q_{\marg}^{\pi}(s,a) = \prefcum(s,a) + \sum_{s'}P(s'  \mid s,a)V_{\marg}^{\pi}(s')$.
    \item $V_{\marg}^{\pi}(s)=\sum_{a'}d^{\pi}(a' \mid s) Q_{\marg}^{\pi}(s,a')$, where $d^{\pi}$ is the Markov decision rule that determines $\pi$.
\end{enumerate}
}

\begin{proof}
    Let $\tau$ be as defined in Lemma~\ref{lem:deviation}.
    Once we establish that $V_{M}^{\pi}(s)$ is bounded, parts (2) and (3) follow immediately from the definitions of $V_{M}^{\pi}(s)$
    and $Q_{M}^{\pi}(s,a)$. To show part (1), we will first show that
 $|V_{M}^{\pi}(s)| \le M_{\text{max}}\tau$, then use part (2) to show that $|Q_{M}^{\pi}(s,a)| \le 2 M_{\text{max}}\tau$.

    To show that $|V_{M}^{\pi}(s)| \le M_{\text{max}}\tau$, it suffices to show that
    \begin{equation}
        \label{eq:v-dev}
        V_{M}^{\pi}(s) =
        \sum_{\tilde{s},\tilde{a}}[H_{P_{d^{\pi}}}]_{\tilde{s},s}\pi(\tilde{a} \vert
        \tilde{s}) c_{M}^{\pi}(\tilde{s},\tilde{a})
    .\end{equation} 
    If~\eqref{eq:v-dev} holds, then $|V_{M}^{\pi}(s)|\le M_{\text{max}}\tau$, since $\|H_{P_{d^{\pi}}}\|_1\le \tau$ and $|c_{M}(\tilde{s},\tilde{a})| \le
    M_{\text{max}}$.

    First note that, because $M$ is skew-symmetric,
    \begin{equation}
    \label{eq:0-cum}
    0=\sum_{\tilde{s},\tilde{a}}\sum_{\tilde{s}',\tilde{a}'}x^{\pi}(\tilde{s},\tilde{a})x^{\pi}(\tilde{s}',\tilde{a}')M((\tilde{s},\tilde{a}),(\tilde{s}',\tilde{a}'))=
    \sum_{\tilde{s},\tilde{a}}x^{\pi}(\tilde{s},\tilde{a})c_{M}^{\pi}(\tilde{s},\tilde{a})
    .\end{equation}
    Therefore,
   \begin{align*}
       & \sum_{\tilde{s},\tilde{a}}[H_{P_{\pi}}]_{\tilde{s},s} \pi(\tilde{a} \vert
       \tilde{s})c_{M}^{\pi}(\tilde{s},\tilde{a})\\
       &= \sum_{\tilde{s},\tilde{a}}(\sum_{t=0}^{\infty}
       [P_{\pi}^t]_{\tilde{s},s}-[P_{\pi}^{*}]_{\tilde{s},s})) \pi(\tilde{a} \vert
       \tilde{s})c_{M}^{\pi}(\tilde{s},\tilde{a})\tag{Apply
       Lemma~\ref{lem:deviation}.(\ref{dev:one})}\\
       &= \sum_{t=0}^{\infty}\sum_{\tilde{s},\tilde{a}}
       ([P_{\pi}^t]_{\tilde{s},s}-[P_{\pi}^{*}]_{\tilde{s},s}) \pi(\tilde{a} \vert
       \tilde{s})c_{M}^{\pi}(\tilde{s},\tilde{a})\\
       &= \sum_{t=0}^{\infty}\sum_{\tilde{s},\tilde{a}}
       ([P_{\pi}^t]_{\tilde{s},s}-\nu^{\pi}(\tilde{s})) \pi(\tilde{a} \vert
       \tilde{s})c_{M}^{\pi}(\tilde{s},\tilde{a})\tag{Apply
       Lemma~\ref{lem:deviation}.(\ref{dev:two})}\\
       &= \sum_{t=0}^{\infty}\sum_{\tilde{s},\tilde{a}}
       ([P_{\pi}^t]_{\tilde{s},s}\pi(\tilde{a} \vert
       \tilde{s})-x^{\pi}(\tilde{s},\tilde{a}))
       c_{M}^{\pi}(\tilde{s},\tilde{a})\\
       &= \sum_{t=0}^{\infty}\sum_{\tilde{s},\tilde{a}}
       [P_{\pi}^t]_{\tilde{s},s}\pi(\tilde{a} \vert \tilde{s})
       c_{M}^{\pi}(\tilde{s},\tilde{a})\tag{Apply~\eqref{eq:0-cum}}\\
       &= \sum_{t=0}^{\infty}E_{t}^{\pi}[c_{M}^{\pi}(S_t,A_t) \vert S_0=s]\\
       &= V_{M}^{\pi}(s)
   .\end{align*}
   This proves~\eqref{eq:v-dev}. The bound on $|Q_{M}^{\pi}(s,a)|$ follows by the triangle inequality, the bounds on  $|c_{M}(s,a)|$ and $|V_{M}^{\pi}(s)|$, and the fact that $\tau\ge 1$:
    \begin{align*}
        |Q_{M}^{\pi}(s,a)|&= \left| c_{M}^{\pi}(s,a)+ \sum_{s'}P(s' \vert
        s,a)V_{M}^{\pi}(s')\right|\\
&\le \left| c_{M}^{\pi}(s,a)\right|+ \sum_{s'}P(s' \vert
        s,a)\left|V_{M}^{\pi}(s')\right|\\
&\le M_{\text{max}}+ \sum_{s'}P(s' \vert s,a)M_{\text{max}}\tau\\
        &\le 2 M_{\text{max}} \tau
    .\end{align*}
\end{proof}

\subsection{Proof of Lemma~\ref{lem:pd}}
\label{ap:proof-pd}
Recall that the average preference margin between policies $\pi,\pi'\in\Piwd$ is denoted by $\bar{\marg}(\pi,\pi')$:
\[
\bar{\marg}(\pi,\pi')=\sum_{s,a}\sum_{s',a'}x^{\pi}(s,a)x^{\pi'}(s',a')\marg((s,a),(s',a'))
.\]

\textbf{Lemma~\ref{lem:pd}} (Performance Difference Lemma). \textit{If the Markov decision contest is aperiodic (in addition to being unichain), then, for all stationary policies $\pi,\pi'\in\Pisr$,
    \begin{align*}
        \bar{\marg}(\pi,\pi')&=\sum_{s,a}x^{\pi}(s,a) (Q_{\marg}^{\pi'}(s,a) - V_{\marg}^{\pi'}(s))
    .\end{align*}
}
\begin{proof}
    By Lemma~\ref{lem:unichain}, $x^{\pi}\in X$, and so, by~\eqref{eq:flow}, it satisfies
    \begin{equation}
    \label{eq:v-flow}
    \sum_{s,a}x^{\pi}(s,a)\sum_{s'}P(s' \vert s,a)V_{M}^{\pi'}(s') =
    \sum_{s,a}x^{\pi}(s,a)V_{M}^{\pi'}(s)
    .\end{equation}
    The expression for $\bar{M}(\pi,\pi')$ now follows from our previous
    definitions, Lemma~\ref{lem:marg-basic}.(\ref{basic:q-express}),
    and~\eqref{eq:v-flow}:
    \begin{align*}
        \bar{M}(\pi, \pi')&= \sum_{s,a} \sum_{s',a'}x^{\pi}(s,a)
        x^{\pi'}(s',a')M((s,a),(s',a'))\\
                               &= \sum_{s,a}x^{\pi}(s,a)c_{M}^{\pi'}(s,a)\\
                               &= \sum_{s,a}x^{\pi}(s,a) \left( Q_{M}^{\pi'}(s,a) -
                               \sum_{s'}P(s' \vert
                           s,a)V_{M}^{\pi'}(s')\right)\tag{Apply Lemma~\ref{lem:marg-basic}.(\ref{basic:q-express})}\\
                               &=\sum_{s,a}x^{\pi}(s,a)
                               (Q_{M}^{\pi'}(s,a)-
                               V_{M}^{\pi'}(s'))\tag{Apply~\eqref{eq:v-flow}}
    .\end{align*}
\end{proof}

\subsection{Proof of Theorem~\ref{thm:hpi-convergence}}
\label{ap:proof-hpi-convergence}

\textbf{Theorem~\ref{thm:hpi-convergence}.} \textit{Suppose that the Markov decision contest aperiodic (in addition to being unichain). When Hedged Policy Iteration (Algorithm~\ref{alg:hpi}) runs for $K\ge\log(|\mathcal{A}|)$ iterations and its learning rate $\eta$ is equal to $(2\,\marg_{\text{max}}\,\tau)^{-1}\,\sqrt{\log(|\mathcal{A}|)/K}$, the optimality gap its return policy is no greater than 
\[
4\,\margmax\,\tau\sqrt{\frac{\log(|\mathcal{A}|)}{K}}
.\]
}
\begin{proof}
    Hedged Policy Iteration (Algorithm~\ref{alg:hpi}) applies the Hedge algorithm (reviewed in Appendix~\ref{ap:hedge}) to select the action probabilities of each state $s$
    in parallel. For each state $s$, the Hedge algorithm score function at iteration
    $k$, denoted by $z_{s,k}$, is given by
\[
z_{s,k}(a)=Q_{M}^{\pi_k}(s,a)
.\]
By Lemma~\ref{lem:marg-basic}, $|Q_{M}^{\pi_k}(s,a)|\le 2 M_{\text{max}}\tau$,
$\sum_{a}d^{\pi_k}(a \vert s)Q_{M}^{\pi_k}(s,a)=V_{M}^{\pi_k}(s)$, and the learning rate $\eta$ satisfies $|\eta\,
z_{s,k}(a)|\le 1$, for all $s$, $k$, and $a$. Thus, the Hedge algorithm bound (Theorem~\ref{thm:hedge-convergence})
applied to each state $s'$ shows that,
\[
    \forall s'\in\mathcal{S},\quad\max_p \sum_{k=1}^K\sum_{a}[p(a) Q_{M}^{\pi_k}(s',a)- V_{M}^{\pi_k}(s')]\le \frac{\log (|\mathcal{A}|)}{\eta} + \eta\, K(2 M_{\text{max}}\tau)^2 
.\] 
After dividing both sides of the inequality by $K$ and evaluating its right-hand side for our choice of $\eta$, the result shows that,
\[
   \forall s'\in\mathcal{S},\quad \max_p \frac{1}{K}\sum_{k=1}^K\sum_{a}[p(a) Q_{M}^{\pi_k}(s',a)-
    V_{M}^{\pi_k}(s')]\le 4 M_{\text{max}}\tau \sqrt{\frac{\log(|\mathcal{A}|)}{K}}
.\] 
So,
\[
    \max_{\pi\in \Pi^{SR}} \frac{1}{K}\sum_{k=1}^K\sum_{s,a}x^{\pi}(s,a)[ Q_{M}^{\pi_k}(s,a)- V_{M}^{\pi_k}(s)]\le 4 M_{\text{max}}\tau \sqrt{\frac{\log(|\mathcal{A}|)}{K}}
.\] 
But, the left-hand side of this inequality is equal to the exploitability of HPI's
return policy, by the performance difference lemma (Lemma~\ref{lem:pd}):
\begin{align*}
    \max_{\pi\in \Pi^{SR}} \frac{1}{K}\sum_{k=1}^K\sum_{s,a}x^{\pi}(s,a)[Q_{M}^{\pi_k}(s,a)- V_{M}^{\pi_k}(s)]=\max_{\pi\in\Pi^{SR}}\frac{1}{K}\sum_{k=1}^K
    \sum_{s,a}\sum_{s',a'}x^{\pi}(s,a)x^{\pi_k}(s',a')M((s,a),(s',a'))
.\end{align*}
Therefore,
\[
\max_{\pi\in\Pi^{SR}}\frac{1}{K}\sum_{k=1}^K
    \sum_{s,a}\sum_{s',a'}x^{\pi}(s,a)x^{\pi_k}(s',a')M((s,a),(s',a'))
\le 4 M_{\text{max}}\tau \sqrt{\frac{\log(|\mathcal{A}|)}{K}}
.\] 
\end{proof}

\subsection{Proof of Lemma~\ref{lem:omm}}
\label{ap:proof-omm}
\textbf{Lemma~\ref{lem:omm}.} \textit{Let $x\in X$ be an occupancy measure. A stationary randomized policy $\pi\in \Pi^{SR}$ satisfies $x^{\pi}=x$ if and only if $\pi$ is given by a decision rule that is a solution to
\[
\max_{\theta} \sum_{s,a} x(s,a) \log d_{\theta}(a|s)
.\]
}
\begin{proof}
Suppose that $\pi$ is given by the decision rule $d^{\pi}$. The objective
    \[
    \max_{\theta}\sum_{s,a}x(s,a)\log d_{\theta}(a|s)
    \]
    is maximized by a decision rule $d^{\pi}$ if and only if, for all states $s'$
    satisfying $\sum_{a}x(s',a)>0$, $d^{\pi}(\cdot  \vert s')$ is a solution to
    \[
    \max_{\theta} \sum_{a}\frac{x(s',a)}{\sum_{a'}x(s',a')} \log d_{\theta}(a|s')
    .\]
    These objectives, in turn, are all simultaneously maximized by $d^{\pi}$ if and
    only if $d^{\pi}(a|s') = x(s',a)/\sum_{a'}x(s',a')$ for all actions $a$ and     states $s'$ satisfying $\sum_{a'} x^{\pi'}(s,a')>0$. This final condition is both
    necessary and sufficient to guarantee that $x^{\pi}=x$, by Lemma~\ref{lem:unichain}.(\ref{unichain:decision-rule}).
\end{proof}

\subsection{Proof of Proposition~\ref{prop:pg-form}}
\label{ap:proof-pg-form}
The proof of Proposition~\ref{prop:pg-form} uses the following simple lemma.

\begin{lemma}
    \label{lem:max-basic}
For every distribution $\nu\in\text{Dist}(\mathcal{S})$ and every function
$f:\mathcal{S}\times\mathcal{A}\to\mathbb{R}$, if
$d^*:\mathcal{S}\to\text{Dist}(\mathcal{A})$ is a solution to 
    \[
    \max_{d:\mathcal{S}\to\text{Dist}(\mathcal{A})}\sum_{s,a}\nu(s)d(a|s)f(s,a)
    \]
    then, at all $s'$ for which $\nu(s')>0$, $d^*(\cdot  \vert s')$ is a solution to
    \[
 \max_{d(\cdot|s')}\sum_a d(a|s')f(s',a).
    \]
\end{lemma}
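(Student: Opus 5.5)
The objective separates across states, because the feasible set of decision rules is the Cartesian product $\prod_{s}\text{Dist}(\mathcal{A})$. We can write
\[
\sum_{s,a}\nu(s)d(a|s)f(s,a)=\sum_{s}\nu(s)\,g_s(d(\cdot|s)),\qquad g_s(q)=\sum_a q(a)f(s,a).
\]
Here each term $g_s$ depends only on the component $d(\cdot|s)$, and the choice of $d(\cdot|s)$ places no restriction on the other components.

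The plan is a direct contradiction argument. Suppose $d^*$ is a maximizer, but for some $s'$ with $\nu(s')>0$ the component $d^*(\cdot|s')$ does not maximize $g_{s'}$ over $\text{Dist}(\mathcal{A})$. Since $\text{Dist}(\mathcal{A})$ is compact and $g_{s'}$ is linear, hence continuous, a maximizer $q^\star$ of $g_{s'}$ exists. Non-optimality then gives $g_{s'}(q^\star)>g_{s'}(d^*(\cdot|s'))$.

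Define $\tilde d$ by $\tilde d(\cdot|s')=q^\star$ and $\tilde d(\cdot|s)=d^*(\cdot|s)$ for $s\neq s'$. This is a valid decision rule because of the product structure. By separability, the objective changes by
\[
\nu(s')\bigl(g_{s'}(q^\star)-g_{s'}(d^*(\cdot|s'))\bigr)>0,
\]
which is strictly positive since $\nu(s')>0$. This contradicts the optimality of $d^*$, and the lemma follows.

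There is no real obstacle; the only point requiring care is the hypothesis $\nu(s')>0$. It is exactly what makes the improvement strict. At states with $\nu(s')=0$, the component $d^*(\cdot|s')$ is unconstrained, which is why the lemma claims nothing there.
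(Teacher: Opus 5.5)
Your proof is correct. The paper states this lemma without proof, calling it ``simple'', and your separability-plus-exchange argument is the natural way to supply one. You write the objective as $\sum_s \nu(s)\, g_s(d(\cdot|s))$ over the product set $\prod_s \text{Dist}(\mathcal{A})$, change $d^*$ only at $s'$, and use $\nu(s')>0$ to make the improvement strict.

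Two small remarks. First, you do not need compactness to produce $q^\star$. If $d^*(\cdot|s')$ is not a maximizer of $g_{s'}$, then by definition some $q\in\text{Dist}(\mathcal{A})$ has $g_{s'}(q)>g_{s'}(d^*(\cdot|s'))$, and that $q$ works just as well. In the linear case you could also take a point mass on an action maximizing $f(s',\cdot)$.

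Second, and more useful: your argument never uses linearity of $g_s$. It only needs each summand to depend on $d(\cdot|s)$ alone and the feasible set to be a product over states. That is exactly what the paper needs where it invokes the lemma, in step (f) of the proof of Proposition~\ref{prop:pg-form}. There the per-state objective contains the term $-\frac{1}{\eta}\log\frac{d_\theta(a|s)}{d_k(a|s)}$, so it is not literally of the lemma's form $\sum_a d(a|s)f(s,a)$ with $f$ fixed. Your proof nonetheless covers that application verbatim.
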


\textbf{Proposition~\ref{prop:pg-form}.} \textit{If the finite Markov decision contest is aperiodic (in addition to being unichain), then Algorithm~\ref{alg:hpi-pg} converges at the rate described in Theorem~\ref{thm:hpi-convergence}.}

\begin{proof}To prove the proposition, we will need to separate the notation for HPI (Algorithm~\ref{alg:hpi}) and HPI PG (Algorithm~\ref{alg:hpi-pg}). Let $d_k, \pi_k, x_k$ be the decision rule, policy, and occupancy measure of the $k$-th policy iterate of HPI.
We will let $\hat{d}_k, \hat{\pi}_k, \hat{x}_k$ be the decision rule, policy, and occupancy measure of the decision rule at iteration $k$ of HPI PG. We will also let
$\nu_{k}(s)=\sum_{a'}x_{k}(s,a')$. So $x_{k}(s,a)=\nu_{k}(s)d_k(a \vert s)$, and the
HPI-PG decision rule $\hat{d}_{k+1}$ is chosen as a solution to
\begin{equation}
    \label{eq:hpg-rule}
\max_{\theta}\sum_{s,a} \hat{\nu}_{k}(s,a)\hat{d}_{\theta}(a \vert s) \left(
Q_{M}^{\hat{\pi}_k}(s,a)- \frac{1}{\eta} \log \frac{\hat{d}_{\theta}(a \vert
s)}{\hat{d}_{k}(a \vert s)} \right)
.\end{equation} 
   Proposition~\ref{prop:pg-form} follows quickly if we can establish that:
   
   (P) \textit{For all iterations $k\ge 1$, actions $a'\in \mathcal{A}$, and states
   $s'$ that satisfy $\nu_{k}(s')>0$,}
\[
\hat{d}_{k}(a \vert s) = d_{k}(a \vert s)
.\]
If (P) holds, then, by Lemma~\ref{lem:unichain}.(\ref{unichain:decision-rule}),
   $x_k=\hat{x}_k$ for all $k\ge 1$. Consequently,
\[
\frac{1}{K}\sum_{k=1}^K \hat{x}_k = \frac{1}{K}\sum_{k=1}^{K}x_k 
.\] 
Thus, by Lemma~\ref{lem:omm}, the occupancy measure of the return policies of HPI
and HPI-PG are equal. It then immediately follows that the optimality gap of the return policies of HPI and HPI-PG are also equal.

   It remains to show (P). We do so by induction on the iteration $k$. We have labeled the steps, for clarity.

   \begin{enumerate}[label=(\alph*)]
       \item The base case, when $k=1$, is immediate because $\hat{d}_1(a \vert s) =
           d_1(a \vert s)= 1 / |\mathcal{A}|$ for all $s$ and $a$.
       \item For the induction step, fix $k\ge 1$ and assume that, for all actions
           $a'$ and all states $s'$
           satisfying $\nu_{k}(s')>0$, $\hat{d}_k(a \vert s)= d_k(a \vert s)$. 
       \item By Lemma~\ref{lem:unichain}.(\ref{unichain:decision-rule}),
           $\hat{\nu}_k=\nu_k$. So, $\hat{d}_{k+1}$ is a solution to
           \[
           \max_{\theta}\sum_{s,a} \nu_{k}(s,a)\hat{d}_{\theta}(a \vert s) \left(
           Q_{M}^{\hat{\pi}_k}(s,a)- \frac{1}{\eta} \log \frac{d_{\theta}(a \vert
       s)}{d_{k}(a \vert s)} \right)
           .\] 
   \item Because the environment is unichain, $Q_{M}^{\pi_k}(s,a)=Q_{M}^{\hat{\pi}_k}(s',a)$ for all actions $a$ and $s'$
       satisfying  $\sum_{a}x_k(s',a)>0$. This is because the state-transition
       matrix of $\pi_k$ has a single recurrence class. So, when starting in a
       state $s'$ that satisfies $\nu_{k}(s')>0$, every state $s''$ that is
       visited after $s'$ satisfies $\nu(s'')>0$. Thus, the
       decision rules $\hat{d}_k$ and $d_k$ agree on every state visited after $s'$,
       and, as a result, $Q_{M}^{\pi_k}(s',a)=Q_{M}^{\hat{\pi}_k}(s',a)$.
       \item Thus, $\hat{d}_{k+1}$ is a solution to
           \[
           \max_{\theta}\sum_{s,a} \nu_{k}(s,a)d_{\theta}(a \vert s) \left(
           Q_{M}^{\pi_k}(s,a)- \frac{1}{\eta} \log \frac{d_{\theta}(a \vert
       s)}{d_{k}(a \vert s)} \right)
           .\] 
   \item By Lemma~\ref{lem:max-basic}, for all $s'$ 
       satisfying $\nu_{k}(s')>0$, $\hat{d}_{k+1}(\cdot  \vert s')$ is a
       solution to
       \[
           \max_{d_{\theta}(\cdot  \vert s')} \sum_{a}d_{\theta}(a \vert s') \left(
           Q_{M}^{\pi_k}(s',a)- \frac{1}{\eta} \log \frac{d_{\theta}(a \vert
       s')}{d_k(a \vert s')} \right)
       .\] 
   \item By the properties of the softmax update rule, $\hat{d}_{k+1}$ satisfies, for
       all actions $a'$ and all states $s'$ with $\nu_{k}(s')>0$,
        \[
            \hat{d}_{k+1}(a' \vert s') = d_k(a' \vert s') \exp (\eta
            \, Q_{M}^{\pi_k}(s',a'))
       .\] 
       Thus, $\hat{d}_{k+1}(a' \vert s')=d_{k+1}(a' \vert s')$ for all actions $a'$
       and states $s'$ satisfying $\nu_k(s')>0$. 
   \item Lastly, because the action probabilities of all policy iterates are
       strictly positive, $s'$ satisfies
       $\nu_{k}(s')>0$ if and only if it satisfies
       $\nu_{k+1}(s')>0$. So, $\hat{d}_{k+1}(a' \vert s')=d_{k+1}(a' \vert s')$ for all actions $a'$
       and states $s'$ satisfying $\nu_{k+1}(s')>0$. 
   \item This completes the inductive step, and thus the proof of (P).
   \end{enumerate}   
\end{proof}

\section{Relationship with the Bradley-Terry Model}
\label{ap:bt-model}
The Bradley-Terry (BT) choice model, given by $\mathcal{P}_{\text{BT}}((s,a),(s',a')) = \sigma(r(s,a) - r(s',a'))$, differs from the preference margin $M$ in the following sense: if $M((s,a),(s',a'))=\mathcal{P}_{\text{BT}}((s,a),(s',a'))$ then the optimal policy under preference margin $M$ is not equal to the optimal policy under reward function $r$. Mathematically, the issue is that the set of solutions to $\max_{\pi}\min_{\pi'}\mathbb{E}_{\pi,\pi'}[\sigma(r(s,a)- r(s',a'))]$ is not equal to the set of solutions to $\max_{\pi}\mathbb{E}_{\pi}[r(s,a)]$.

\citet[Appendices A and B]{munos2024nash} argue that it is better to model preferences with a pairwise preference function instead of a BT model in many cases. But, of course, there may be cases where the input to the problem is truly stochastic choice data that is sampled from the BT model. Here, the difference is resolved by using the stochastic choice model $\mathcal{P}_M((s,a),(s',a'))=\sigma(M((s,a),(s',a'))$ to learn $M$ from the data. Under ideal conditions, the learned value for $M$ will be $r(s,a)-r(s',a')$. And then, for the reasons discussed in Section~\ref{sec:markov-decision-contests}, an optimal policy of the Markov decision contest with preference margin $M$ will be an optimal policy of the Markov decision process with reward function $r$. 
\section{Conditions on Transition Probability Functions}
\label{ap:unichain}
Here, we'll first discuss how finite-horizon Markov decision contests can be represented as infinite-horizon Markov decision contests. Then, we will elaborate on unichain and aperiodic transition probability functions.
\subsection{Representing Finite-Horizon Decision Problems as Infinite-Horizon Decision Problems}
For $\gamma\in [0,1)$ and policy $\pi\in\Pi^{HR}$, the \textit{$\gamma$-discounted occupancy measure of $\pi$}, denoted by $x^{\pi,\gamma}$, is given by
\[
x^{\pi,\gamma}(s,a)=\sum_{t=0}^{\infty}\gamma^t \text{Pr}_{t}^{\pi,\mu}(S_t=s,A_t=a)
.\]
Unlike average occupancy measures, the $\gamma$-discounted occupancy measures are well-defined for all transition probability functions and policies. Within finite Markov decision processes (MDPs), a policy is optimal under the discounted reward criterion if it maximizes the expected reward under its discounted occupancy measure~\citep{sutton2018rl}. In~\citep{carr2026thesis}, we introduce the $\gamma$-discounted Markov decision contest, where optimality is defined similarly.

\citet[Chapter 3]{sutton2018rl} show that any finite-horizon Markov decision process can be represented as a discounted, infinite-horizon Markov decision process. This is because the occupancy measure in the finite-horizon decision process can be associated with an occupancy measure in the infinite-horizon Markov decision process. For the same reasons, any finite-horizon Markov decision contest can be represented as a discounted, infinite-horizon Markov decision contest. 

\subsection{Unichain and Aperiodic Transition Probability Functions}
When the horizon is truly infinite, it is questionable whether state-action pairs should be prioritized depending on the order in which they occur.~\citet[Chapter 10]{sutton2018rl} argue that the parameter $\gamma$ should be removed from the reinforcement-learning problem definition, and that the average-reward criterion should be the default optimality criterion.

But, average reward is not always well-defined~\citep{puterman2014markov}. In average-reward MDP analysis, it is common to assume that the transition probability function is unichain. Under this assumption, the occupancy measures of all stationary policies are well-defined, and solutions can be recovered through linear programming.

Many reinforcement learning algorithms that solve average-reward Markov decision processes make use of \textit{differential values} and \textit{differential state-action values} $V_r^{\pi}(s)$ and $Q_{r}^{\pi}(s,a)$, which are given by the equations
\begin{align}
\label{eq:diff-values}
    V_r^{\pi}(s)&=\sum_{t=0}^{\infty}E_t^{\pi}[r(S_t,A_t) - \bar{r}(\pi)|S_0=s]\\
        Q_r^{\pi}(s,a)&=\sum_{t=0}^{\infty}E_t^{\pi}[r(S_t,A_t) - \bar{r}(\pi)|S_0=s, A_0=a]
.\end{align}
Here, $\bar{r}(\pi)=\sum_{s,a}x^{\pi}(s,a)r(s,a)$. Unfortunately, even when the transition probability function is unichain, the differential values may diverge. The aperiodicity assumption ensures that the differential values converge. And so, it is often assumed when analyzing reinforcement learning algorithms that solve average-reward MDPs~\citep{sutton1999policy,tsitsiklis1996analysis}. One simple way to ensure that a transition probability function is unichain and aperiodic is to add a small probability of returning to the initial state distribution in each state.
\section{Challenges with standard approximation methods}
\label{ap:challenges}
Recall the game from Equation~\ref{eq:markov-game}:
\[
\max_{\pi\in\Pisr}\min_{\pi'\in\Pisr}\sum_{s,a}\sum_{s',a'}x^{\pi}(s,a)x^{\pi'}(s',a')\marg((s,a),(s',a'))
.\]
As for the standard iterative solution for solving this game (online mirror descent), for $p\in \text{Dist}(\{1,\ldots,|\mathcal{A}|^{|\mathcal{S}|}\})$, define the policy $\pi_p\in\Pi^{SR}$ through its decision rule $d^{\pi_p}$, where
\[
d^{\pi_p}(a|s) = \sum_{i=1}^{|\mathcal{A}|^{|\mathcal{S}|}}p(i)d^{\pi_{\text{det}}^i}(a|s)
.\]
The standard mirror descent method for solving this game would randomize $p_0$ as uniform, and then update according to 
\begin{equation}
\label{eq:omd-update}
    p_{k+1}\in \argmax_p \bar{\marg}(\pi_p,\pi_{p_k}) - \frac{1}{\eta}\sum_{i=1}^{|\mathcal{A}|^{|\mathcal{S}|}}p(i)\log \frac{p(i)}{p_k(i)}
.\end{equation}
This update step is challenging for two reasons. First, computing $\bar{M}(\pi_p,\pi_{p_k})$ is expensive. It requires computing the long-term average performance of policies $\pi$ and $\pi_k$ in an infinite-horizon decision process. Second, the regularization term (the second term) seems difficult to approximate. When the decision rule $d^{\pi_p}$ is parameterized by a weight vector, it's unclear how to estimate the probability $p(i)$ that the decision rule $d^{\pi_p}$ uses to ``select'' the deterministic policy $\pi_{\text{det}}^i$.
\section{The Hedge algorithm}
\label{ap:hedge}
The basic problem setting for the Hedge algorithm is as follows. At each iteration $k$, the algorithm selects a probability distribution $p_k$ over $n$ possible actions and observes a score $z_{k}:\{1,\ldots,n\}\to\mathbb{R}$. The Hedge algorithm initializes its first distribution $p_1$ as uniform random, and defines subsequent distributions $p_{k+1}$ according to the rule
\[
p_{k+1}(i) \propto p_k(i) \text{exp}(\eta z_k(i))
.\]
Here, $\eta>0$ is the learning rate. In~\citet{carr2026thesis}, we review the Hedge algorithm and give a proof of the following result. The proof method was suggested by~\citet{arora2012mult}. There are several variants of this convergence result, which use other conditions on the scores and learning rate (e.g.~\citet{cesabianchi2006prediction,kivinen1997eg,freund1995decision}).
\begin{theorem}[Hedge bound]
\label{thm:hedge-convergence}
    If the Hedge Algorithm's learning rate $\eta$ is chosen so that $|\eta \,z_k(i)|\le 1$ for all iterations $k$ and actions $i$, then its probability distributions $p_1,\ldots, p_k$ satisfy
    \[
\max_p \sum_{k=1}^{K}\sum_{i=1}^n [p(i)z_k(i) -p_k(i) z_k(i)] \le \frac{\log (n)}{\eta} + \eta\,\sum_{k=1}^K\|z_k^2\|_{\infty} 
.\]
Here, $\|z_k^2\|_{\infty}=\max_i (z_k(i))^2$.
\end{theorem}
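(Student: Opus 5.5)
The Hedge bound (Theorem~\ref{thm:hedge-convergence}) is the standard multiplicative-weights regret bound. I would prove it with the potential-function argument of \citet{arora2012mult}. Define the unnormalized weights $w_1(i)=1$ and $w_{k+1}(i)=w_k(i)\exp(\eta z_k(i))$. Let $W_k=\sum_i w_k(i)$, so that $p_k=w_k/W_k$. The plan is to compare an upper bound and a lower bound on $\log(W_{K+1}/W_1)$.

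\textbf{Upper bound.} Write
\[
W_{k+1}/W_k=\sum_i p_k(i)\exp(\eta z_k(i)).
\]
Since $|\eta z_k(i)|\le 1$, the elementary inequality $e^u\le 1+u+u^2$ for $u\le 1$ applies. It gives
\[
W_{k+1}/W_k\le 1+\eta\sum_i p_k(i)z_k(i)+\eta^2\sum_ip_k(i)z_k(i)^2\le 1+\eta\langle p_k,z_k\rangle+\eta^2\|z_k^2\|_\infty .
\]
Then $\log(1+v)\le v$ yields $\log(W_{k+1}/W_k)\le \eta\langle p_k,z_k\rangle+\eta^2\|z_k^2\|_\infty$. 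Summing over $k$ bounds $\log W_{K+1}-\log n$, since $W_1=n$.

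\textbf{Lower bound.} For any distribution $p$, concavity of $\log$ (Jensen's inequality) gives
\[
\log W_{K+1}=\log\sum_i p(i)\frac{w_{K+1}(i)}{p(i)}\ge \sum_i p(i)\Big(\eta\sum_k z_k(i)\Big)+H(p),
\]
where $H(p)\ge 0$ is the entropy of $p$. Alternatively, one can compare with the best single action $i^*$ via $W_{K+1}\ge w_{K+1}(i^*)$; this suffices because the maximum over $p$ of a linear function is attained at a vertex. Combining the two bounds and dividing by $\eta$ gives
\[
\sum_k\langle p,z_k\rangle-\sum_k\langle p_k,z_k\rangle\le \frac{\log n}{\eta}+\eta\sum_k\|z_k^2\|_\infty .
\]

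The only delicate step is the quadratic upper bound on the exponential. I will verify that $e^u\le 1+u+u^2$ holds for all $u\le 1$; this is exactly where the hypothesis $|\eta z_k(i)|\le1$ enters. For $u\in[0,1]$, the series tail $\sum_{j\ge2}u^j/j!$ is at most $u^2(e-2)\le u^2$. For $u<0$, the inequality $e^u\le 1+u+u^2/2$ holds directly.
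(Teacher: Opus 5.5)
Your proposal is correct and uses the same potential-function argument of \citet{arora2012mult} that the paper cites as its proof method (the paper defers the details to \citet[Chapter 2]{carr2026thesis}), with the hypothesis $|\eta z_k(i)|\le 1$ entering exactly through $e^u\le 1+u+u^2$ for $u\le 1$. One cosmetic fix: if $p$ has zero entries, your Jensen step is not an equality. You should sum over the support of $p$, which gives $\ge$ instead of $=$, or simply use your comparison with the single best action, which is cleaner.
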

\begin{proof}
    See~\citet[Chapter 2]{carr2026thesis}. 
\end{proof} 

\section{Algorithm Details}
\label{ap:algorithm-details}
Our code is available here: \url{https://github.com/j-c-carr/markov-decision-contests}

\subsection{HPI-Clip}
\label{ap:hpi-clip}
For a randomized stationary policy $\pi\in\Pi^{SR}$, the \textit{marginal advantage} $A_M^{\pi}(s,a)$ is defined as
\[
A_{M}^{\pi}(s,a) = Q_M^{\pi}(s,a) - V_M^{\pi}(s)
.\]
Because the marginal advantage is equal to the marginal state-action value minus a state-dependent baseline, the decision rule updates of HPI (Algorithm~\ref{alg:hpi}) and HPI-PG (Algorithm~\ref{alg:hpi-pg}) do not change if the marginal state-action values are replaced with marginal advantages. When this replacement is made for HPI-PG, the decision-rule update step becomes
\begin{equation}
    \label{eq:avd-update}
    \max_{\theta}\sum_{s,a}x^{\pi_k}(s,a)\frac{d_{\theta}(a|s)}{d_k(a|s)}\left(A_{M}^{\pi_k}(s,a) - \frac{1}{\eta}\log \frac{d_{\theta}(a|s)}{d_k(a|s)}\right)
.\end{equation}
This objective is maximized if $\frac{d_{\theta}(a|s)}{d_k(a|s)}A_{\marg}^{\pi}(s,a)$ is large and $\log \frac{d_{\theta}(a|s)}{d_k(a|s)}$ is small. Since $\log 1=0$, one surrogate objective for~\eqref{eq:avd-update} is 
\begin{equation}
    \label{eq:clip-objective}
    \max_{\theta}\sum_{s,a}x^{\pi_k}(s,a)L_{\epsilon}^{\text{clip}}(s,a,d_{\theta},d_k)
,\end{equation}
where
\begin{align}
\label{eq:clip}
    &L_{\epsilon}^{\text{clip}}(s,a,d_{\theta},d_k)=\min\bigg\{\frac{d_{\theta}(a|s)}{d_k(a|s)} A_{M}^{\pi_k}(s,a),\text{clip}_{1\pm\epsilon}\left(\frac{d_{\theta}(a|s)}{d_k(a|s)}\right)A_{M}^{\pi_k}(s,a)\bigg\}
.\end{align}
Here, the function $\text{clip}_{1\pm \epsilon}:\mathbb{R}\to\mathbb{R}$ clips numbers to be within the range $[1-\epsilon,1+\epsilon]$, and it effectively replaces the regularization term.
When the decision-rule-update objective from HPI-PG is replaced with the objective in~\eqref{eq:clip-objective}, we call the resulting algorithm \textit{HPI-Clip}. This deep learning implementation of HPI-Clip is given in Algorithm~\ref{alg:hpi-clip}.

\begin{algorithm}[tb]
\caption{HPI-Clip (Deep Learning Implementation)}\label{alg:hpi-clip}
    \begin{algorithmic}[1]
    \STATE{\textbf{Input}: clip parameter $\epsilon$, buffer $\mathcal{B}_{\text{avg}}$, number of behavior cloning steps $n_{\text{BC}}$,}
    \STATE{Initialize decision rule $\hat{d}_1:\mathcal{S}\to\text{Dist}(\mathcal{A})$ arbitrarily}
    \STATE{Initialize marginal advantage estimate $\hat{A}_M^{\pi_1}:\mathcal{S}\times\mathcal{A}\to\mathbb{R}$ arbitrarily}
    \FOR{$k=1,\ldots,K$}
    \STATE {\color{\commentcolor}\# 1. Estimate occupancy measure}
    \STATE{Collect samples $\mathcal{B}=\{(s_t^i,a_t^i)_{t=0}^{T-1}\}_{i=1}^N$ from $\pi_k$}
    \STATE{$\mathcal{B}_{\text{avg}}\gets\text{UpdateBuffer}(\mathcal{B}_{\text{avg}},\mathcal{B})$}
    \STATE {\color{\commentcolor}\# 2. Evaluate (approximately)}
    \STATE{$\hat{c}_{k,t}^i \gets \frac{1}{|\mathcal{B}|}\sum_{s',a'\in\mathcal{B}}\marg((s_t^i,a_t^i), (s',a'))$}
    \STATE{$\hat{A}_M^{\pi_k}\gets \text{UpdateAdv}(\hat{A}_M^{\pi_{k-1}},\{(s_t^i,a_t^i,\hat{c}_{k,t}^i)_{t=0}^{T-1}\}_{i=1}^N)$}
    \STATE {\color{\commentcolor}\# 3. Update decision rule}
    \STATE{Select $\hat{d}_{k+1}$ by optimizing:}
        \STATE {$\max_{\theta}\frac{1}{NT}\sum\limits_{s_t^i,a_t^i}L_{\epsilon}^{\text{clip}}(s_t^i,a_t^i, d_{\theta},\hat{d}_k)$}
    \ENDFOR
    \STATE {\color{\commentcolor}\# 4. Estimate average policy}
    \STATE{Compute $\hat{d}_{K}^{\text{HPI}}$ by optimizing, for $n_{\text{BC}}$ gradient steps, }
    \STATE {\small $\max_{\theta} \frac{1}{|\mathcal{B}_{\text{avg}}|}\sum_{s,a\in\mathcal{B}_{\text{avg}}} \log d_{\theta}(a|s)$}
    \STATE{\textbf{Return} $\hat{d}^{\text{HPI}}_{K}$}
\end{algorithmic}
\end{algorithm}

When there exists a reward function $r$ such that $M((\tilde{s},\tilde{a}),(\tilde{s}',\tilde{a'}))=r(\tilde{s},\tilde{a}) - (\tilde{s}',\tilde{a}')$ for all $(\tilde{s},\tilde{a})$ and  $(\tilde{s}',\tilde{a}')$, it is not too difficult to show that
\[
A_M^{\pi}(s,a) = A_r^{\pi}(s,a)
,\]
where $A_r^{\pi}(s,a)$ is the advantage function used for solving Markov decision processes~\citep{sutton2018rl}. In this special case, the objective in~\eqref{eq:clip-objective} is equal to the PPO clip objective of~\citet[Equation 7]{schulman2017proximal}. So, HPI-Clip can be viewed as a generalization of PPO, which learns from preference margins instead of reward functions.

\subsection{Implementation details}
In light of the connections between HPI-Clip and PPO discussed in Section~\ref{ap:hpi-clip}, all algorithm implementations were based off of~\citet{huang2022cleanrl}'s implementation of Proximal Policy Optimization. Both HPI-Clip and SPPO adapt PPO by supplying an estimate of the win-rate against the previous policy iterate as the reward function for the PPO update. We implemented SPPO~\citet{swamy2024minimaximalist}, estimating the win-rate against the previous policy iterate as the sample average of the win-rate between the current state-action pairs and those stored in a queue of a fixed size $B$, which gets updated at each iteration. The critical difference for SPPO is that, because it is designed to optimize for preferences between trajectories, it takes the per-timestep reward as the trajectory average. In Appendix~\ref{ap:sppo} we study the relationship between HPI-Clip and SPPO in greater depth.


HPI was implemented in the exact same way as HPI-Clip, except that PPO's clipped surrogate objective was instead replaced with the objective from Algorithm~\ref{alg:hpi-pg}. For HPI, we used a learning rate of $\eta=1.5$ across all tasks.


\paragraph{Behavior cloning.} To estimate the occupancy-measure-matching policies of HPI and HPI-Clip, we performed either 0 or 20 epochs of stochastic gradient ascent to the objective from Lemma~\ref{lem:omm}---this we call the behavior cloning step. We began behavior cloning from the final policy iterate of these algorithms and used a stored buffer $\mathcal{B}_{\text{avg}}$ of sample trajectories to estimate the distribution $\frac{1}{K}\sum_{k=1}^K d^{\pi_k}$. In our experiments, we stored one trajectory per iteration (which amounted to roughly 490 trajectories stored in total), though in future work it would be interesting to explore how to reduce the number of samples required in this step. We did zero steps of behavior cloning on the Mujoco suite and 20 steps of behavior cloning in tasks with nontransitive preferences.

\subsection{Hyperparameters and network architectures}
\begin{table*}[ht]
\centering
\caption{Shared hyperparameters. From~\citep{huang2022cleanrl}.}
\begin{tabular}{lc}
\toprule
{Parameter} & {Value} \\
\midrule
{Learning rate} & $3 \times 10^{-4}$ \\
{Adam epsilon} (numerical stability for optimizer) & $1 \times 10^{-5}$ \\
{Environment steps per policy update} & $2048$ \\
{Total environment steps} & $1 \times 10^{6}$ \\
{Update Epochs} & $10$ \\
{Number of minibatches} & $32$ \\
{GAE gamma} & $0.99$ \\
{GAE lambda} & $0.95$ \\
{Clip epsilon} & $0.2$ \\
{Entropy coefficient} & $0.0$ \\
{Value function coefficient} & $0.5$ \\
{Max gradient norm} & $0.5$ \\
{Activation function} & \texttt{tanh} \\
{Anneal learning rate} & \texttt{True} \\
\bottomrule
\end{tabular}
\label{tab:ppo_hyperparams_mujoco}
\end{table*}

\paragraph{Shared hyperparameters.} Table~\ref{tab:ppo_hyperparams_mujoco} shows the list of hyperparameters shared across HPI-Clip, PPO, and SPPO. HPI replaced the ``Clip epsilon'' hyperparameter with the learning rate $\eta$. We explored different hyperparameter choices for three hyperparameters: the learning rate $\eta$ of HPI, the queue size $B$, and the learning rate for Adam in SPPO.
We did a hyperparameter search for HPI's learning rate $\eta$ in $\{0.75, 1.5, 3, 6, 12\}$ by observing performance in Ant-v5, HalfCheetah-v5, and Inverted Double Pendulum-v5 and taking the best average performance (average reward over the last 100,000 environment steps) over five random seeds.

The HPI-Clip, SPPO, and HPI algorithms all had an additional hyperparameter for the queue size. In the Mujoco-v5 suite, we started with a queue size of $B=100$, as was suggested in~\citet{swamy2024minimaximalist}. However, we found that all algorithms would suffer from performance collapse in easy tasks, such as Inverted Pendulum-v5, where it quickly becomes almost impossible for the policy iterates to ``win'' over previous iterates when previous iterates are consistently near-optimal. There are several methods for mitigating this (e.g. by applying early stopping); we simply chose to store and compare against an additional 100 random samples from the first policy iterate across all timesteps, and that resolved the issue. As in~\citet{swamy2024minimaximalist}, we reduced the queue size to $B=10$ for tasks with non-transitive preferences.

The final hyperparameter we considered was learning rate for SPPO's Adam optimizer.~\citet{swamy2024minimaximalist} increase the learning rate with respect to their baseline algorithm (Soft Actor Critic) tenfold. However, we found that increasing the learning rate of default PPO decreased performance of in all three environments in which we conducted hyperparameter tuning (Ant-v5, HalfCheetah-v5, and Inverted Double Pendulum-v5). The discrepancy here is likely due to the fact that the reinforcement learning algorithm implemented to update the policy in~\citet{swamy2024minimaximalist}'s paper was Soft Actor Critic~\citep{haarnoja2018soft}, while here we chose to implement the reinforcement learning algorithm update with Proximal Policy Optimization~\citep{schulman2017proximal}, because of the relationship discussed in Appendix~\ref{ap:hpi-clip}. After noting this discrepancy for SPPO, we did not try changing the learning rate for HPI-Clip from~\citet{huang2022cleanrl}'s default value for PPO.

\paragraph{Behavior cloning hyperparameters.} The BC training uses a higher learning rate ($1\times 10^{-2}$ vs $3\times 10^{-4}$ for PPO) and relaxed gradient clipping (1.0 vs 0.5). The objective minimizes the negative log-probability of the actions in $\mathcal{C}$ under the current policy distribution. The trajectories in $\mathcal{C}$ are flattened, shuffled, and processed in minibatches over multiple epochs. Table~\ref{tab:bc-params} shows the full list of behavior-cloning hyperparameters.

To choose the hyperparameters for behavior cloning, we searched for the BC learning rate in $\{1\times 10^{-2}, 1\times 10^{-3}\}$ and the number of update epochs in $\{0, 20, 50\}$ by choosing those that led to the highest performance after 400 000 training steps in Reacher-NT (we used a shorted amount of training steps, because evaluation on non-transitive preferences is significantly more computationally expensive).

\begin{table}[h]
\centering
\caption{Behavior cloning (BC) hyperparameters, used for HPI and HPI-Clip only.}
\label{tab:bc-params}
\begin{tabular}{lcc}
\toprule
Parameter & Value & Description \\
\midrule
BC Number of epochs & 20 & Number of full passes through BC data \\
BC Learning rate & $1 \times 10^{-2}$ & Learning rate for BC optimizer \\
BC Number of minibatches & 32 & Number of minibatches per epoch \\
BC Max gradient norm & 1.0 & Gradient clipping threshold\\
\bottomrule
\end{tabular}
\end{table}

\paragraph{Network architectures.} The network architectures used to represent the policy and value networks were kept constant across all algorithms and were the default ones used in~\citet{huang2022cleanrl}'s implementation of PPO. The policy and value networks share an identical two-layer fully-connected architecture with 64 hidden units each, using Tanh activation functions and orthogonal weight initialization with scaling factors of $\sqrt{2}$ for hidden layers. The policy network outputs continuous actions through a multivariate normal distribution with diagonal covariance, where the mean is produced by a final dense layer (orthogonally initialized with 0.01 scaling) and the log standard deviation is a learnable parameter shared across environments but separate for each action dimension. The value network uses the same hidden layers but terminates with a single scalar output (orthogonally initialized with unit scaling) to estimate state values.

\subsection{Pseudocode for Mujoco-NT Preference Margins}
\label{ap:preference-margins-pseudocode}
Here is how the preference margins for ReacherNT and Walker2dNT were implemented:

\begin{verbatim}
def reacher_nt_preference(obs_1, obs_2):
    radius_pref = 2 * ((obs_1.radius > obs_2.radius) - 0.5)
    
    difference = math.fmod(obs_1.angle + angle/2.0 - obs_2.angle, 2 * math.pi)
    angle_pref = difference < theta/2.0 or difference > 2 * math.pi - theta/2.0
    angle_pref = 2 * (angle_pref - 0.5)
    return 0.3 * radius_pref + 0.7 * angle_pref
\end{verbatim}
\begin{verbatim}
def walker2d_nt_preference(obs_1, obs_2):
    height_1 = clip((obs_1[0] - 1.0) / 0.3, 0, 1)
    height_2 = clip((obs_2[0] - 1.0) / 0.3, 0, 1)
        
    speed_1 = clip(obs_1[8] / 2.0, 0, 1)
    speed_2 = clip(obs_2[8] / 2.0, 0, 1)
        
    stability_1 = clip((-|obs_1[1]| + 0.5) / 0.5, 0, 1)
    stability_2 = clip((-|obs_2[1]| + 0.5) / 0.5, 0, 1)
        
    # Find dominant feature
    # 0=height, 1=speed, 2=stability
    dominant_1 = argmax([height_1, speed_1, stability_1])
    dominant_2 = argmax([height_2, speed_2, stability_2])
        
    # Preference matrix:
    #           High  Fast  Stable
    # High   [   0,    1,    -1  ]
    # Fast   [  -1,    0,     1  ]  
    # Stable [   1,   -1,     0  ]
       
    return preference_matrix[dominant_1, dominant_2]
\end{verbatim}
\end{document}